\def\br{\boldsymbol {\rm r}}
\def\d{\!\!{\rm d}}
\def\br{\boldsymbol {\rm r}}
\def\brho{{\boldsymbol {\rho}}}
\def\T{{\rm T}}
\def\A{{\boldsymbol{\rm A}}}
\def\B{\mathcal B}
\def\q{{{\mathscr{q}}}}
\newcounter{concount}
\newcounter{exampcount}
\newcounter{algcount}
\newtheorem{property}{Property}
\DeclareMathOperator*{\argmin}{argmin}
\title{Analysis of Diagnostics (Part I):  Prevalence, Uncertainty Quantification, \& Machine Learning}
\author{Paul N.\ Patrone\footnotemark[1] 
\and Raquel A.\ Binder\footnotemark[2]
\and Catherine S.\ Forconi\footnotemark[2]
\and Ann M.\ Moormann\footnotemark[2]
\and Anthony J.\ Kearsley\footnotemark[1]
}
\begin{document}

\renewcommand{\thefootnote}{\fnsymbol{footnote}}

\footnotetext[1]{National Institute of Standards and Technology, Gaithersburg MD 20899, USA}
\footnotetext[2]{Department of Medicine, Division of Infectious Diseases and Immunology, University of Massachusetts Chan Medical School, Worcester,
MA, 01655, USA}

\maketitle

\begin{abstract}
Diagnostic testing provides a unique setting for studying and developing tools in classification theory.  In such contexts, the concept of \textit{prevalence}, i.e.\ the number of individuals with a given condition, is fundamental, both as an inherent quantity of interest and as a parameter that controls classification accuracy.  This manuscript is the first in a two-part series that studies deeper connections between classification theory and prevalence, showing how the latter establishes a more complete theory of uncertainty quantification (UQ) for certain types of machine learning (ML).  We motivate this analysis via a lemma demonstrating that general classifiers minimizing a prevalence-weighted error contain the same probabilistic information as Bayes-optimal classifiers, which depend on conditional probability densities.  This leads us to study \textit{relative probability level-sets} $B^\star(q)$, which are reinterpreted as both classification boundaries and useful tools for quantifying uncertainty in class labels.  To realize this in practice, we also propose a numerical, homotopy algorithm that estimates the $B^\star(q)$ by minimizing a prevalence-weighted empirical error.  The successes and shortcomings of this method motivate us to revisit properties of the level sets, and we deduce the corresponding classifiers obey a useful monotonicity property that stabilizes the numerics and points to important extensions to UQ of ML.  Throughout, we validate our methods in the context of synthetic data and a research-use-only SARS-CoV-2 enzyme-linked immunosorbent (ELISA) assay. 
\end{abstract}

\section{Appeal to the Reader}

Data analysis for diagnostic testing is not a task that one necessarily  associates with new applied mathematics.  Perusing the literature, one could be forgiven for believing that most of the hard problems have been reduced to identifying a ``cutoff value'' for classifying test results and estimating rates of false-positives and false-negatives.  Yet in the past few years, several of us have shown that many tasks in diagnostics are anchored in surprising ways at the intersection of measure theory, metrology, and optimization \cite{Patrone21_1,Patrone22_1}.  This perspective resolved several misconceptions and led to seemingly paradoxical (but true) results.  For example, classifiers need not, and at times should not, be used to estimate the number of individuals in a given class, i.e. the \textit{prevalence.} \cite{Patrone22_2}.%\footnote{More precisely, classifiers that maximize classification accuracy do not in general yield minimum-uncertainty prevalence estimates.}

Motivated by such findings, this manuscript is the first in a series that further explores the interplay between probability and classification, using diagnostics to study their deeper relationship to uncertainty quantification (UQ).  When we first started on this path, our goal was more simply to minimize the assumptions needed to solve practical classification problems in diagnostic settings.  But this line of inquiry took us into unexpected territory, and we soon realized that the concept of prevalence, which is often ignored in classification and machine learning (ML), is in fact central to the whole enterprise.  Thus, in some sense, this work is really about the many interpretations of prevalence and how they connect certain topics of applied mathematics.  Given the range of issues that emerged, however, we felt that a single manuscript was not sufficient to cover them all.  The present manuscript therefore focuses on a fundamental connection between prevalence, optimization, UQ, and supervised ML techniques.  The second manuscript builds on this foundation to show how prevalence connects classification and unsupervised learning to aspects of linear algebra.  

Because many of these concepts arise naturally in epidemiology and diagnostics, we believe that such settings are well-suited for our exposition.  As a result, this manuscript is unabashedly interdisciplinary.  Typical readers should have a broad interest in the aforementioned fields of applied mathematics, with a willingness to ground the underlying ideas in the basics of \textit{serological assays (or blood-based diagnostic tests)}.  The latter contain elements common to many classification and ML problems.  We also feel that grounding the analysis in examples makes it easier to understand.  In this vein, we employ a hybrid writing style, staying within the confines of mathematics but adopting diagnostics-inspired terminology and notation.  %In doing so, our hope is to reduce barriers to communication across fields.

\section{Introduction and Motivation}
\label{sec:introduction}

%\subsection{Conceptual Motivation and Introduction}

Tasks in diagnostics share many features with more general classification problems and are therefore useful for illustrating key concepts.   \cite{Venables}.  Consider, for example, a setting wherein individuals $\omega$ in a population or sample space $\Omega$ can have some condition, e.g.\ an infection.  Instead of knowing the medical status of each individual, we are given a measurement outcome $\br(\omega)$, which can be interpreted as both a diagnostic test result and a random variable.   Epidemiologists seek to answer many questions on the basis of $\br(\omega)$, among them: ({\bf QI}) how many people have the condition; and ({\bf QII})  which ones have the condition?  The first question is referred to as a {\it prevalence estimation} problem;\footnote{Prevalence is the fraction of individuals with a condition.  In the context of serology testing, this is more appropriately called ``seroprevalence,'' which is the fraction of individuals with blood-borne markers of previous infection.  We use the words prevalence and seroprevalence interchangeably.} the second is clearly a classification problem.  See Refs.\ \cite{3Sig1,EUA,ROC,3Sig2,3Sig3}, for example.

In analyzing the general form of ({\bf QII}), many works have found it convenient to: (i) assume that $\br: \Omega \mapsto \Gamma$ for some set $\Gamma \subset \mathbb R^m$; and (ii) postulate the existence of a probability space induced by probability density functions (PDFs) of the $\br(\omega)$ conditioned on the true classes of the sample points \cite{Bayes,RW}.  Given this, the pointwise structure of the PDFs can be used to construct a partition $\Gamma$ whose elements define a minimum-error class assignment rule, e.g.\ via a Bayes optimal approach \cite{Bayes,RW}.  However, diagnostics points to two issues that are sometimes neglected in this more general setting.  First, the point-wise classification accuracy can be difficult to estimate because it strongly depends on the prevalence \cite{Patrone22_1}.  This is especially problematic, for example, when the prevalence is changing due to spread of a disease \cite{Bedekar22}.  Second, the conditional PDFs may be difficult to model, especially when $\br$ is a high-dimensional vector or there is limited training data.

The present manuscript addresses these problems by identifying a duality between prevalence and the conditional PDFs that simultaneously simplifies the modeling process and realizes point-wise uncertainty estimates.  We begin by studying the properties of a prevalence-weighted classification error, which we use as an objective function for training classifiers.  This leads to a lemma that motivates an isomorphism between certain low-dimensional boundary sets $B^\star$ and classifiers defined via partitions of $\Gamma$.  We then formulate a homotopy method that approximates such $B^\star$ in terms of empirical data, thereby generalizing techniques such as one-dimensional (1D) receiver-operating characteristics \cite{ROC} and support vector machines (SVM) \cite{SVM1}.  The successes and shortcomings of this approach also motivate a deeper analysis of its connection to probability.  Here we arrive at our three main results: (i) the boundary sets are in fact \textit{relative conditional probability level sets}; (ii) for fixed $\br$, the optimally assigned class $\hat C(\br)$ is a monotone function of prevalence; and thus (iii) the relative conditional probability and uncertainty in the true class can be determined from the value of prevalence at which $\hat C(\br)$ changes.  We also discuss how this \textit{level-set theory} modifies our numerical methods and unifies certain types of ML algorithms.  Throughout, these ideas are validated in the context of both synthetic data and the SARS-CoV-2 enzyme-linked immunosorbent assay (ELISA) developed in Ref.\ \cite{Raquel1}.\footnote{The assay is for serological research-use-only.}

A key challenge in this work is to distinguish the interpretation of prevalence as a property of a population from its role as a parameter used in optimization.  We only consider settings for which it is meaningful to assume that a test sample has some \textit{a priori} probability $q$ of belonging to a given class.  But by the same token, we may always ask, ``what are the prevalence values for which $\br$ is optimally assigned a chosen class?''  The latter perspective does not consider a specific population per se, but rather a family of them parameterized by $q$.  We ultimately connect these interpretations through the law of total probability \cite{totprob}, and our main results reveal a third way of understanding prevalence in terms of classification accuracy.  
%A fundamental challenge in this analysis is to distinguish between and unify three distinct interpretations of prevalence.  On one hand, it is a property of a population that answers the question, ``how many samples belong to a given class?''  On another hand, 

The practical motivation for our line of inquiry arises from the way in which empirical distributions overcome issues of quantifying and controlling {\it model-form} errors; see Refs.\ \cite{MLUQ1,BoisvertUQ,SmithUQ} for a general overview of the latter.  In typical assay design settings, one may only have access to $\mathcal O(100)$ or fewer training samples with which to construct the conditional PDFs.  This limits the usefulness of spectral methods, for example, which are unbiased but  have a slow, mean-squared rate of convergence \cite{SMC2,Patrone17,SmithUQ,SMC3,SMC1}.  More tractable approaches rely on techniques such as maximum likelihood estimation for parameterized probability distributions \cite{MLE,Luke23_1,RW,SmithUQ}, but in practice, their uncertainty can increase dramatically with dimension (e.g.\ number of antigens).  The empirical distribution is thus a desirable object with which to work because its samples are generated from the underlying true (but unknown) distribution, which reduces the need for subjective modeling.  Ultimately, our hope is that the resulting uncertainties are tied directly to the amount of sampling and therefore remain controllable.

This hope, however, also points to the key limitations of our numerical methods.  One must make some choices to avoid trivially over-fitting the classification boundaries.  In the examples herein, we assume that these boundaries are well described by low-order polynomials, since this facilitates optimization, works well with many biological datasets, and avoids introducing high-frequency structure.  This choice is also more general and flexible than postulating parameterized forms of the conditional PDFs.  However, our approach still assumes structure of the underlying data, and it is not clear when and to what extent the analysis is unbiased and/or converges.  We numerically address these questions in the context of specific examples, although a deeper treatment remains an open problem.

Our analyis is also restricted to a binary setting.  We make this choice for two reasons.  First, the binary classification problem is sufficiently rich that we feel that it warrants a study of its own.  Second, we anticipate that multiclass problems can be addressed by extension of the ideas developed herein.  Thus, while we point to generalizations of our analysis to multiclass settings, we leave an in-depth study of such issues for future work.

The rest of the manuscript is organized as follows.  Section \ref{sec:setting} establishes the mathematical setting of diagnostic classification and formally relates boundary sets to classifiers defined via partitions.  Section \ref{sec:classification} formulates our classification algorithm from a general perspective and in the context of a quadratic model.  Section \ref{sec:validation} validates aspects of the analysis for synthetic and real-world data.  Section \ref{sec:UQ} establishes the level-set interpretation of our classifier and formulates constraints that yield uncertainty estimates. Section \ref{sec:discussion} makes deeper connections with previous works and discusses limitations and open directions of our analysis.

\section{Mathematical Setting}
\label{sec:setting}

\subsection{Notation}
\label{subsec:notation}

We leverage the following conventions from diagnostics.
\begin{itemize}
\item[(a)] except when referring to the sign of a number, the terms ``negative'' and ``positive'' generally denote some condition, e.g.\ an individual having a certain type of antibody in his or her blood sample indicating past infection;
%\item[(b)] when used as subscripts, the letters $l$ and $h$ denote ``low'' and ``high;''
\item[(b)] when used as subscripts, the letters $n$ and $p$, which we often substitute for $0$ and $1$, denote ``negative'' and ``positive'' in the sense of (a);
\item[(c)] likewise, the functions $N(\br)$ and $P(\br)$ are (conditional) probability densities associated with negative and positive populations in the sense of (a).
\end{itemize}

We also employ the following notation throughout.
\begin{itemize}
\item Bold lowercase Roman and Greek letters (e.g.\ $\br$) denote column vectors; non-bold versions are scalars.
\item Subscripts $i$, $j$, and $k$ attached to vectors denote random realizations of a vector, not its components.
\item The symbols $D$ and $B$  always refer to sets.  $B$ always denotes a ``boundary set.''  Distinguish this from $\mathcal B(\br)$, which denotes a representation of $B$ in terms of a function.  We always equate $B=\{\br:\mathcal B(\br)=0\}$.
\item Bold uppercase letters (e.g.\ $\A$) denote matrices; subscripted, non-bold versions are matrix elements.  For example, $A_{i,j}$ is the $(i,j)$th element of $\A$.
\item The notation $X_D$ means
\begin{align}
X_D = \int_{D}\d\br \,\,\, X(\br)
\end{align}
where $X$ is always a probability density function and $D$ is a set.  That is, $X_D$ is the measure of set $D$ with respect to the PDF $X(\br)$.  
\item The acronym {\it iid} always means ``independent and identically distributed.''
%\item When used in the argument of a function, a semicolon separates variables from parameters, the latter being treated as fixed.  For example, $Z(\br;q)$ denotes a function of $\br$, where $q$ should be interpreted as constant.  While this distinction is largely semantic, our analysis eventually reinterprets parameters as quantities that we control.  This has important implications for uncertainty quantification.  
\end{itemize}

\subsection{Key Assumptions}
We always assume absolute continuity of measure \cite{Tao}.  In particular, we always assume that individual points $\br\in \Omega$ have zero measure with respect to all distributions.

\subsection{Background Theory}
\label{subsec:bg}

Consider a diagnostic assay that is used to determine the properties of a {\it test population} or sample space $\Omega$.  Let individuals $\omega\in \Omega$ of this population belong to one of two classes $C(\omega)$, referred to colloquially as ``negative'' and ``positive.''  Note that $C(\omega)$ is a discrete random variable.  In practice, the class $C(\omega)$ of an individual in a test population is unknown.  Instead, we are given a different random variable $\br(\omega)$, which we interpret as a measurement (i.e.\ a diagnostic result) $\br$ in some set $\Gamma \subset \mathbb R^n$.  Given only $\br(\omega)$, the goal of classification is to deduce $C(\omega)$ of each individual, ideally with the highest possible accuracy.  The goal of prevalence estimation is to determine the fraction of positive individuals in the population.  In both cases, we refer to the collection of measurements being analyzed as the {\it test data}.    In the following, we work directly with the random variables $\br$, typically omitting explicit dependence on $\omega$ or the underlying probability space, i.e.\ as defined in terms of $\Omega$, a $\sigma$-algebra, and a measure.  See Refs.\ \cite{Evans,pspace} for more rigorous context on such issues.

\begin{example}[Antibody Assays]
Antibodies are immune molecules created by the body in order to disable or identify pathogens such as viruses.  In order to do so, antibodies typically undergo a process of hyper-evolution that enables them to bind selectively to viral proteins or structures that mediate infection of a cell, for example \cite{hypermutation}.  Moreover, antibodies in blood are often long-lasting.  For this reason, they are frequently the target of blood-based or serological measurements that seek to determine if a person has previously been infected by some pathogen \cite{genantibody}.  

\begin{figure}
\includegraphics[width=8cm]{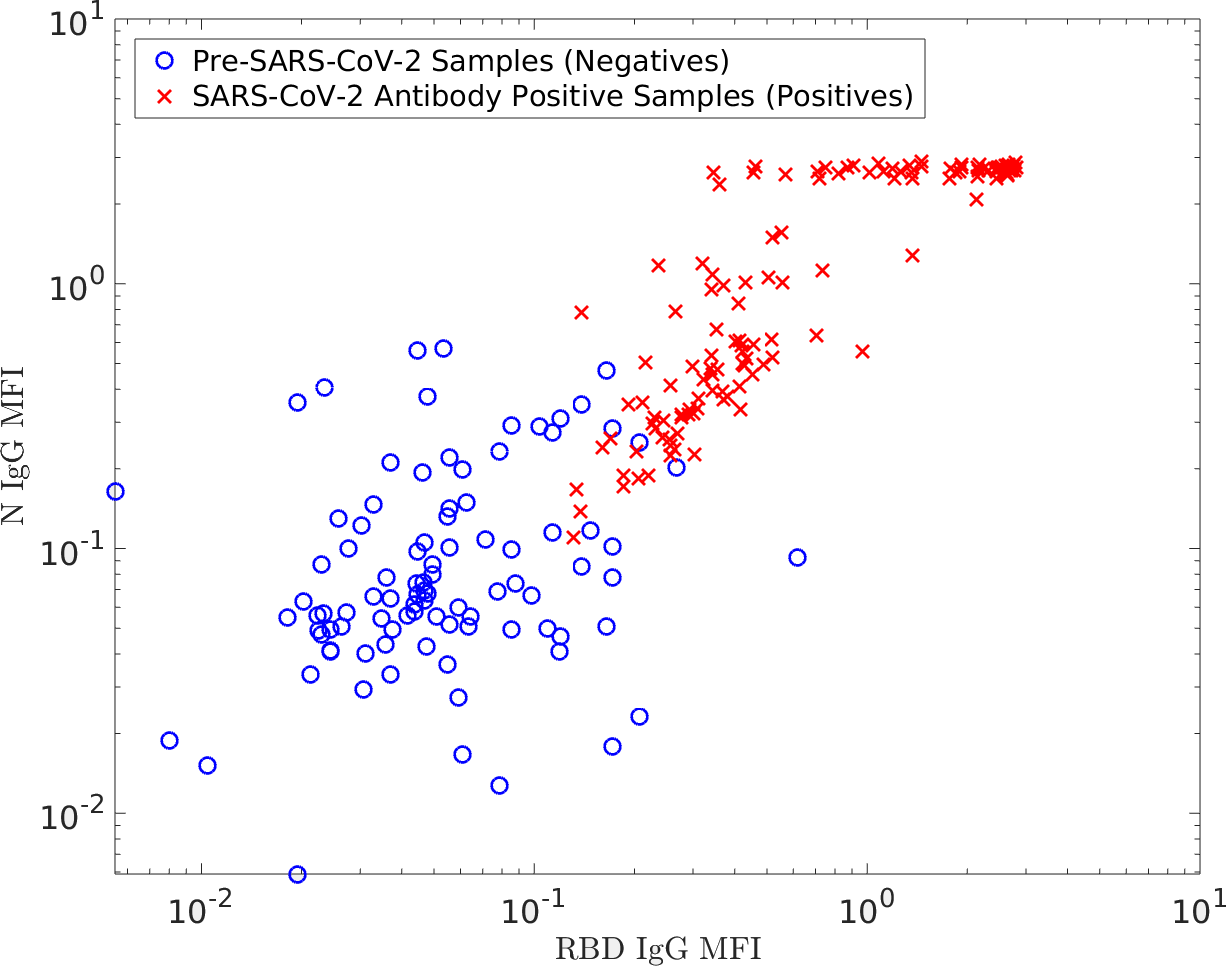}\caption{Representative output of an antibody assay.  Each point corresponds to a random variable $\br(\omega)$ whose underlying sample point $\omega$ is an individual that donated blood.    The $\br(\omega)$ are measurement outcomes of a diagnostic test that quantifies the amount of two types of antibodies that bind to different parts of the SARS-CoV-2 virus.  The horizontal axis is the scale for the dimensionless mean fluorescence intensity (MFI) measurement that quantifies the amount of receptor-binding-domain (RBD) immunoglobulin g (IgG) antibodies in each sample.  The vertical axis sets the corresponding scale for nucleocapsid (N) IgG antibodies.  In a typical diagnostic setting, the underlying true class $C(\omega)$ of the individual is unknown.  In the figure, however, extra information allows us to classify the data as either negative (blue \textcolor{blue}{o}) or positive (red \textcolor{red}{x}).  Thus, this dataset is an example of {\it pure training data}; cf.\ Definition \ref{def:pure}.  See Ref.\ \cite{Raquel1} for additional details of this data.}\label{fig:schematic}
\end{figure}

Figure \ref{fig:schematic} illustrates a typical collection of serology measurements, where each point $\br(\omega)$ corresponds to a sample taken from a different individual $\omega$, and each dimension of $\br(\omega)$ is a different antibody type.  Diagnosticians often seek to answer Questions Q.I and Q.II based on this information alone.  Note that in the figure, the true classes are given, whereas for a test population, $C(\omega)$ is unknown.
\end{example}

\begin{remark}
Figure \ref{fig:schematic} contains many points for which $\br(\omega) \approx (30,30)$.  This value is likely an upper limit of a photodetector used in the measurement process.  Thus, if antibody concentrations are unbounded, this dataset appears to violate our assumption that all $\br(\omega)$ have zero measure.  However, note that the instrument {\it noise} has a wonderfully regularizing effect on the data, in effect smearing out the singularity. 
\end{remark}

To construct classifiers and estimate prevalence, it is useful to characterize the distribution that generates the test data.  Several definitions are in order.
\begin{definition}[Prevalence Convention]
Let $\omega \in \Omega$ be a sample point and $C(\omega)\in \{0,1\}$ be a binary random variable.  We adopt the convention that $C(\omega)=0$ is a ``negative'' sample and $C(\omega)=1$ is a ``positive'' sample.  Moreover, we define the {\bf prevalence} to be the probability $q$ that $C(\omega)=1$.
\end{definition}

\begin{definition}[PDF Conventions]
Let $\omega \in \Omega$ be a sample point and $\br(\omega)\in \Gamma \subset \mathbb R^m$ be a random variable for some positive integer $m$.  If $C(\omega)$ is a binary random variable, we refer to $P(\br)$ as the {\bf conditional probability density function} of $\br(\omega)$ conditioned on $C(\omega)=1$.  That is, it is the probability that a positive sample yields measurement value $\br$.  Likewise, $N(\br)$ is the conditional PDF for a negative sample.  
\end{definition}
\begin{definition}[PDF of a Binary Test Population]
Assume that $C(\omega)$ is a binary random variable, and let $q \in [0,1]$.  We define 
\begin{align}
Q(\br;q)=qP(\br)+(1-q)N(\br)  \label{eq:qdef}
\end{align}
to be the PDF that a {\bf test sample} (whose class is unknown) yields measurement $\br$.  We refer to the underlying sample space from which the $\br$ are generated to be a {\bf test population} with prevalence $q$. 
\end{definition}
\begin{remark}
Equation \eqref{eq:qdef} is a realization of the law of total probability \cite{totprob}.%  In words, Eq.\ \eqref{eq:qdef} states that the probability of a sample at random yielding measurement $\br$ is simply the probability of the sample belonging to a given class multiplied by the probability of a sample in that class yielding measurement $\br$.
\end{remark}

Reference \cite{Patrone21_1} showed that prevalence estimation follows directly from the conditional PDFs and Eq.\ \eqref{eq:qdef} via the following construction.

\begin{lemma}[Class-Agnostic Prevalence Estimator]
Let $C(\omega)$ be a binary random variable and assume $P(\br)$ and $N(\br)$ are known.  Let $D \subset \Gamma$ be a subdomain chosen such that the difference of measures $P_D$ and $N_D$ is non-zero; i.e.\ $|P_D-N_D|>0$.  Let there be $s$ {\it iid} random variables $\br_i$ {\rm (}$i \in \{1,2,...,s\}${\rm )} drawn from $Q(\br;q)$.  Then 
\begin{align}
\tilde q = \frac{\tilde Q_D - N_D}{P_D - N_D}, & \qquad \quad \qquad{\rm where }& \tilde Q_D = \frac{1}{s}\sum_{i=1}^s \mathbb I(\br_i \in D) \label{eq:prevest}
\end{align}
is an unbiased estimate of $q$ that converges in mean-square as $s\to \infty$, where $\mathbb I$ is the indicator function. \label{lem:qprev}
\end{lemma}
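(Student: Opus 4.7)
The plan is to prove the two claims (unbiasedness and mean-square convergence) separately, with both following from a single elementary observation: for each $i$, the random variable $\mathbb I(\br_i \in D)$ is a Bernoulli trial whose success probability can be computed from \eqref{eq:qdef}.

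For unbiasedness, I would first compute
\begin{align}
\mathbb E[\mathbb I(\br_i \in D)] = \int_D Q(\br;q)\d\br = qP_D + (1-q)N_D = Q_D,
\end{align}
using Eq.\ \eqref{eq:qdef} and the definitions of $P_D$ and $N_D$ given in the notation section. By linearity of expectation applied to $\tilde Q_D$, this yields $\mathbb E[\tilde Q_D] = Q_D$. Substituting into the formula for $\tilde q$ and using the hypothesis $P_D - N_D \neq 0$ to justify division,
\begin{align}
\mathbb E[\tilde q] = \frac{Q_D - N_D}{P_D - N_D} = \frac{qP_D + (1-q)N_D - N_D}{P_D - N_D} = q,
\end{align}
establishing unbiasedness.

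For mean-square convergence, I would leverage the \textit{iid} assumption to write
\begin{align}
\operatorname{Var}(\tilde Q_D) = \frac{1}{s}\operatorname{Var}\bigl(\mathbb I(\br_1 \in D)\bigr) = \frac{Q_D(1-Q_D)}{s},
\end{align}
since each indicator is Bernoulli with parameter $Q_D$. Since $\tilde q$ is an affine function of $\tilde Q_D$ with slope $1/(P_D - N_D)$, one has $\operatorname{Var}(\tilde q) = Q_D(1-Q_D)/[s(P_D - N_D)^2]$. Because $\tilde q$ is unbiased, its mean-squared error equals its variance, and this quantity vanishes as $s \to \infty$ (the denominator is a strictly positive constant by hypothesis, and the numerator is bounded by $1/4$).

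I do not expect any serious obstacle here: the result is an immediate consequence of the fact that $\tilde Q_D$ is an empirical average of \textit{iid} bounded random variables whose common mean is a known affine function of $q$. The only mild technical points are (i) verifying measurability of $D$ (implicit in the existence of $P_D$ and $N_D$), and (ii) ensuring the denominator $P_D - N_D$ does not vanish — but this is precisely the hypothesis imposed in the statement. Everything else reduces to algebraic manipulation of the law of total probability in Eq.\ \eqref{eq:qdef}.
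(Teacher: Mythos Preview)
Your proposal is correct and follows essentially the same approach as the paper: integrate Eq.\ \eqref{eq:qdef} over $D$ to obtain $Q_D = qP_D + (1-q)N_D$, observe that $s\tilde Q_D$ is binomial (equivalently, each indicator is Bernoulli with parameter $Q_D$), and read off unbiasedness and mean-square convergence from the resulting mean and variance. The paper only sketches this and defers details to a reference, so your write-up is in fact more complete than what appears in the text.
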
  

Lemma \ref{lem:qprev} follows by integrating Eq.\ \eqref{eq:qdef} and approximating $Q_D$ in terms of $\tilde Q_D$.  The latter is a binomial random variable, which implies that $\tilde q$ is unbiased and converges.  See Ref.\ \cite{Patrone22_2} for more details, as well as a method for optimizing $D$.

\begin{remark}
The domain $D$ in Lemma \ref{lem:qprev} has nothing to do with classification.  The only requirement is that $|P_D-N_D|>0$, which simply states that one probability distribution must have more mass in $D$ than the other.  The (deterministic) term $N_D/(P_D-N_D)$ is a correction factor that removes bias in the naive prevalence estimator $\tilde Q_D$, which is often used when $D$ corresponds to a classification domain (see Def.\ \ref{def:classifier}).  Lemma \ref{lem:qprev} and Refs.\ \cite{Luke23_1,Patrone22_2} generalize the results of Ref.\ \cite{OldPrevOpt}.  
\end{remark}

To classify a test sample, it is common to identify two domains $D_p$ and $D_n$, where $\br\in D_p$ or $\br\in D_n$ is interpreted as $\br$ corresponding to a positive or negative individual (although this is a choice, not an objective truth).  We make this precise as follows.
\begin{definition}
Let $U=\{D_j\}$, $j\in\{0,...,K-1\}$ be a partition of $\Gamma$ with $K$ elements, meaning that
\begin{subequations}
\begin{align}
\bigcup_{j=0}^{K-1} D_j &= \Gamma, \\
 D_j \cap D_k &= \emptyset & j \ne k.
\end{align} 
\end{subequations}
We refer to $U$ as a {\bf classifier} and assign $\br$ to class $j$ if $\br \in D_j$.  That is, $U$ induces a new random variable $\hat C(\br(\omega),U)$ whose value is $j$ when $\br \in D_j$.  When the number of classes $K=2$, we refer to $U$ as a {\bf binary classifier}.
\label{def:classifier}
\end{definition}

\begin{remark}[Equivalence Class of Partitions]
Definition \ref{def:classifier} ensures that a measurement $\br$ is always assigned a unique class.  However, this requirement is slightly stronger than what is needed in practice.  Because each $\br$ is assumed to have zero measure, we may move any countable set of them between classes, or even remove them from $U$ itself, without meaningfully affecting data analysis; that is, only manipulations on sets of positive measure are important.  In the context of a binary classifier, we therefore replace Def.\ \ref{def:classifier} with the requirements that
\begin{align}
\int_{D_0 \cap D_1} \hspace{-7mm}Q(\br;q)\,\,\, \d \br = 0, && \int_{D_0 \cup D_1} \hspace{-7mm}Q(\br;q)\,\,\, \d \br = 1, 
\end{align}
which allows us to work with equivalence classes of partitions that only differ on sets of measure zero.  Here we always adopt the more general perspective that $U$ is only defined up to such an equivalence class.
\end{remark}

An important observation of Ref.\ \cite{Patrone21_1} is that the average error rate of a diagnostic assay is a scalar-valued function of a partition.  In  the binary case:
\begin{lemma}[Binary Classification Error]
Let $U=\{D_n,D_p\}$ be a binary classifier.  The mapping ${\mathcal E: U \to [0,1]}$ given by
\begin{align}
\mathcal E(U;q) = (1-q)N_{D_p} + qP_{D_n} \label{eq:error}
\end{align}
is the average or {\bf expected classification error} associated with $U$ for a test population having prevalence $q$.  \label{def:classerror}
\end{lemma}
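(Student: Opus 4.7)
The plan is to establish the identity by unpacking the definitions of ``expected classification error'' and applying the law of total probability, conditioning on the true class label $C(\omega)$. Definition \ref{def:classifier} tells us that $U=\{D_n,D_p\}$ induces the class assignment $\hat C(\br(\omega),U) = 0$ when $\br(\omega) \in D_n$ and $\hat C(\br(\omega),U) = 1$ when $\br(\omega) \in D_p$, so a misclassification is precisely the event $\hat C(\br(\omega),U) \ne C(\omega)$, and $\mathcal E(U;q)$ should be computed as the probability of that event.

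First, I would condition on the value of $C(\omega)$, which by the Prevalence Convention takes the values $0$ and $1$ with probabilities $1-q$ and $q$, respectively. Given $C(\omega)=1$, the measurement $\br(\omega)$ is distributed according to the conditional density $P$, so misclassification occurs iff $\br(\omega)\in D_n$, an event of conditional probability $P_{D_n}$. Symmetrically, given $C(\omega)=0$, the measurement is drawn from $N$, and misclassification occurs iff $\br(\omega)\in D_p$, with conditional probability $N_{D_p}$.

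Second, assembling these two contributions via the law of total probability --- the same principle that produced Eq.~\eqref{eq:qdef} --- yields
\begin{align*}
\mathcal E(U;q) \;=\; q\,P_{D_n} \,+\, (1-q)\,N_{D_p},
\end{align*}
which is exactly the stated expression. For the codomain assertion, observe that $P_{D_n}$ and $N_{D_p}$ are each measures of subsets of $\Gamma$ with respect to probability densities, so both lie in $[0,1]$; together with $q\in[0,1]$ this makes the right-hand side a convex combination of numbers in $[0,1]$, hence itself in $[0,1]$.

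The argument is essentially a bookkeeping exercise, so I do not anticipate a substantive obstacle. The only subtlety worth flagging is consistency with the equivalence-class convention introduced in the remark following Definition \ref{def:classifier}: because $P_{D_n}$ and $N_{D_p}$ are integrals against densities, they depend on $D_n$ and $D_p$ only through sets of positive measure, so $\mathcal E$ descends to a well-defined function on the equivalence classes of partitions adopted throughout the paper.
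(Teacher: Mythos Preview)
Your proposal is correct and follows essentially the same approach as the paper: both condition on the true class $C(\omega)$, identify the conditional misclassification probabilities as $N_{D_p}$ and $P_{D_n}$, and reassemble via the law of total probability. Your version adds the easy verification that $\mathcal E\in[0,1]$ and the remark about well-definedness on equivalence classes of partitions, neither of which the paper spells out, but the core argument is identical.
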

The proof of Lemma \ref{def:classerror} follows by decomposing $\Pr[\hat C(\br;U) \ne C(\omega)]$ via
\begin{align}
\Pr[\hat C(\br;U) \ne C(\omega)] &= \Pr[\hat C(\br;U) = 1 \cap  C(\omega)=0] + \Pr[\hat C(\br;U) = 0 \cap  C(\omega)=1] \nonumber \\
&=\Pr[\hat C(\br;U) = 1 | C(\omega)=0]\Pr[C(\omega)=0] \nonumber \\ & \qquad + \Pr[\hat C(\br;U) = 0 |  C(\omega)=1]\Pr[C(\omega)=1]. \nonumber
\end{align}
The result then follows by the definition of prevalence.

\begin{note}
In diagnostics,  $N_{D_p}$ and $P_{D_n}$ are often called the rates of false-positives and and false-negatives.  
\end{note}

Like Eq.\ \eqref{eq:qdef}, the average classification error is a realization of the law of total probability.  Equation \eqref{eq:error} also depends on $q$, so that the natural order of tasks is to estimate prevalence (e.g.\ via Lemma \ref{lem:qprev}) before considering classification.  Given this, the following well-known result yields the partition $U$ that minimizes $\mathcal E$  \cite{Patrone21_1,Patrone22_2,RW}.
\begin{lemma}[Optimal Binary Classifier]
Assume  $q\in [0,1]$.  Then the sets 
\begin{subequations}
\begin{align}
D_p^\star(q) &= \{r:qP(\br) > (1-q)N(\br) \} \cup  B_p^\star(q) \label{eq:Dp}\\
D_n^\star(q) &= \{r:qP(\br) < (1-q)N(\br) \} \cup  B_n^\star(q) \label{eq:Dn}
\end{align}
\end{subequations}
minimize $\mathcal E(U;q)$, where $B_p^\star(q)$ and $B_n^\star(q)$ are an arbitrary partition of the set $B^\star(q)= \{r:qP(\br) = (1-q)N(\br) \}$.  That is, Eqs.\ \eqref{eq:Dp} and \eqref{eq:Dn} define an equivalence class of partitions minimizing $\mathcal E(U;q)$. \label{lem:optclass}
\end{lemma}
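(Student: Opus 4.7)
The plan is to reduce the minimization of $\mathcal E(U;q)$ to a pointwise minimization of its integrand. Since $U=\{D_n,D_p\}$ is a binary classifier (up to sets of measure zero), we have $P_{D_n} = 1 - P_{D_p}$, which allows one to rewrite
\begin{align}
\mathcal E(U;q) = q + \int_{D_p} \bigl[(1-q)N(\br) - qP(\br)\bigr]\, \d \br.
\end{align}
The constant $q$ is independent of $U$, so minimizing $\mathcal E$ is equivalent to minimizing the integral over $D_p$. Because the integrand depends only on $\br$ and not on $D_p$ itself, the minimum is achieved by including in $D_p$ exactly those $\br$ where the integrand is negative and excluding those where it is positive. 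This immediately yields the characterization $D_p^\star(q) = \{\br : qP(\br) > (1-q)N(\br)\}$ modulo a subset of $B^\star(q)$, and correspondingly $D_n^\star(q)$ as its complement.

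To make the pointwise argument rigorous, I would compare $U^\star = \{D_n^\star,D_p^\star\}$ with an arbitrary competing binary partition $U' = \{D_n', D_p'\}$. Writing $D_p' = (D_p' \cap D_p^\star) \cup (D_p' \cap D_n^\star)$ and similarly for $D_p^\star$, the difference
\begin{align}
\mathcal E(U';q) - \mathcal E(U^\star;q) = \int_{D_p' \cap D_n^\star}\bigl[(1-q)N(\br)-qP(\br)\bigr]\,\d\br - \int_{D_p^\star \cap D_n'}\bigl[(1-q)N(\br)-qP(\br)\bigr]\,\d\br
\end{align}
is a sum of two integrals whose integrands are nonnegative on their respective domains of integration (by definition of $D_n^\star$ and $D_p^\star$). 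Hence $\mathcal E(U';q) \ge \mathcal E(U^\star;q)$, establishing optimality.

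Finally, the treatment of $B^\star(q) = \{\br : qP(\br)=(1-q)N(\br)\}$ is immediate: on this set the integrand vanishes identically, so any choice of partition $\{B_p^\star(q), B_n^\star(q)\}$ of $B^\star(q)$ contributes zero to $\mathcal E$. This justifies the ``$\cup B_p^\star(q)$'' and ``$\cup B_n^\star(q)$'' terms in Eqs.~(7a)--(7b) and shows that the minimizer is only unique up to the equivalence class of partitions introduced in the remark preceding the lemma. I do not anticipate any real obstacle here; the only care needed is bookkeeping on the boundary $B^\star(q)$, which is harmless under the absolute-continuity assumption already in force (though the statement sensibly refrains from invoking that assumption, since the argument works set-theoretically without it).
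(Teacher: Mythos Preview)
Your proposal is correct. The paper does not actually supply a proof of this lemma; it is stated as a ``well-known result'' with references to \cite{Patrone21_1,Patrone22_2,RW}. That said, your set-difference comparison in the second step is essentially the same computation the paper carries out when proving the \emph{converse} (Lemma~\ref{lemma1}): there the authors write $\mathcal E[D_p^\dagger,D_n^\dagger]-\mathcal E[D_p^\star,D_n^\star]$ as a sum of integrals over $D_n^\star\setminus D_n^\dagger$ and $D_p^\star\setminus D_p^\dagger$ whose integrands are sign-definite by the definition of $D_p^\star,D_n^\star$. Your first reduction, rewriting $\mathcal E(U;q)=q+\int_{D_p}[(1-q)N-qP]\,\d\br$, is a slightly cleaner way to see why the pointwise rule is optimal, and it makes the treatment of $B^\star(q)$ immediate; the paper's converse argument is instead framed as a contradiction on sets of positive measure. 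Both routes are standard and equivalent here.
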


\begin{keyremark}
Observe that the prevalence $q$ is an objective property of a test population.  In contrast, $\mathcal E(U;q)$ depends on objective and subjective quantities (i.e.\  $U$).  However, we could just as well let $q$ take any value on $[0,1]$ in Eqs.\ \eqref{eq:error}--\eqref{eq:Dn}, which would alter our notion of accuracy.  Thus $q$ appearing in $\mathcal E(U;q)$, $D_p^\star(q)$, and $D_n^\star(q)$ should be viewed as a choice that reflects a desire to minimize the average classification error.  While this objective function may seem arbitrary, its structure plays a fundamental role in later sections on UQ.  {\it We advise the reader to remain aware of the tension between the distinct interpretations of $q$ as a property of a population and a  parameter of $\mathcal E(U;q)$.} \label{rem:prev_interp}
\end{keyremark}
  
Given a binary classifier $U$, Ref.\  \cite{Patrone22_1} introduced the following UQ concept.
\begin{definition}
Let $U=\{D_p,D_n\}$ be a binary classifier.  Then for a test population with prevalence $q$, we define the {\bf local accuracy} of $U$ to be
\begin{align}
Z(\br;q,U) = \frac{qP(\br)\mathbb I(\br\!\in\! D_p) + (1-q)N(\br)\mathbb I(\br \!\in\! D_n)}{Q(\br;q)}. \label{eq:localacc}
\end{align}
If $U=\{D_p^\star,D_n^\star\}$, then we use the symbol $Z^\star(\br;q)$. 
\label{def:locacc}
\end{definition}

\noindent {\bf Interpretation:} $Z(\br;q,U)$ is the probability of correctly classifying a sample having measurement $\br$.   In diagnostics, $Z(\br;q,U)$ is sometimes called the post-test probability of having a condition.  However, this concept is not uniformly defined, nor is it always linked to the concept of conditional PDFs; see Refs.\ \cite{posttest1,posttest2,posttest3,posttest4} for conflicting definitions.  Note that $\mathcal E(U;q)=1-\boldsymbol {\rm E}[Z(\br;q,U)]$, where the expectation $\boldsymbol {\rm E}$ is with respect to $Q$. This justifies referring to $Z(\br;q,U)$ as the ``local accuracy.''

\subsection{A Motivating Lemma and Two Postulates}
\label{subsec:motivation}

Equations \eqref{eq:prevest} and \eqref{eq:error} imply that the conditional PDFs are central to interpreting test data, which begs the question: how does one construct $P(\br)$ and $N(\br)$?  In practice, this requires {\it training data}, which is often assumed to be {\it pure} in the sense that the true classes of each datapoint are known.  In this situation, the PDFs $P(\br)$ and $N(\br)$ can be directly constructed via modeling, e.g.\ by fitting to parameterized distributions.  While this task is often tractable when $r\in \mathbb R^m$ for $m\le 2$, it becomes more challenging in higher dimensions.  However, well-designed assays give rise to optimal domains that are simply connected, and in many cases, one of the two domains $D_n^\star$ or $D_p^\star$ is also convex.  Moreover, in typical assays, $B^\star$ often has zero measure with respect to $Q(\br;q)$ for any $q\in (0,1)$.   This suggests that directly modeling $D_n^\star$ and $D_p^\star$ is an alternative to estimating $P(\br)$ and $N(\br)$.

To motivate how we might accomplish this, recognize that the challenge of using Lemma \ref{lem:optclass} arises from needing to construct {\it two} $(m+1)$-dimensional ``manifolds,'' i.e.\ $P(\br)$ and $N(\br)$.  It would be convenient if we could use fewer and lower-dimensional objects.  In fact, we often can.  In practical diagnostic settings, is reasonable to assume that the boundary set $B^\star$ is a $(m-1)$-dimesional manifold, and we only need one such boundary to partition $\Gamma$.  Thus, we need to identify an isomorphism $U\simeq \{B,\mathcal C\}$, where $B$ is a point-set boundary separating $D_p$ and $D_n$, and $\mathcal C$ is an auxiliary convention needed to specify which ``side'' of $B$ corresponds to $D_p$ (for example).  As the next section clarifies, it is extremely convenient to represent $B$ in terms of an analytical function $\mathcal B(\br)=0$, since we may then define $\mathcal C$ via inequalities of the form $\mathcal B(\br) > 0$.  In other words, we propose to study the chain of isomorphisms 
\begin{align}
U=\{D_p,D_n\} \simeq \{B,\mathcal C \} \simeq \{\mathcal B(\br),>\}, \label{eq:isomorpha}
\end{align} 
where we could interpret $\mathcal C \simeq \, >$ to mean that $\mathcal B(\br) > 0$ defines $D_p$.  In fact, given Eqs.\ \eqref{eq:Dp} and \eqref{eq:Dn}, it is easy to establish that \eqref{eq:isomorpha} exists.  
\begin{property}[Weak Level-Set Property]
For every $q\in (0,1)$, the set $B^\star(q)=\{\br:qP(\br) = (1-q)N(\br)\}$ is either the empty set or can be expressed as $B^\star(q) = \{\br: \mathcal B^\star(\br;q)=0\}$ for some $\mathcal B^\star(\br;q)$.  Moreover, $\mathcal B^\star(q)$ can be defined so that $\mathcal B^\star(\br;q) > 0 \iff \br \in D_p^\star/B^\star(q)$ and $\mathcal B^\star(\br;q) < 0 \iff \br \in D_n^\star(q)/B^\star(q)$.   We refer to $\mathcal B^\star(\br;q)$ as a {\bf boundary function} or {\bf level-set function}. \label{post:wls}
\end{property}

The Weak Level-Set (WLS) Property is a trivial consequence of the structure of $B^\star(q)$.  When $P(\br)$ and $N(\br)$ are known, $\mathcal B^\star$ can always be constructed by inspection.  This work, however, focuses on situations for which we wish to construct $\mathcal B^\star(\br;q)$ without access to the conditional PDFs, in which case it is important to know {\it a priori} that the boundary function exists.    From a modeling standpoint, it is also useful to introduce restrictions on the WLS property.

\begin{property}[Strong Level-Set]
Let $\br \in \mathbb R^m$.  If in addition to the WLS property, the locus of points solving $\mathcal B^\star(\br;q)=0$ defines an $(m-1)$-dimensional manifold, then we say that the optimal partition $U^\star$ has the {\bf Strong Level-Set (SLS) property}.  \label{post:sls}
\end{property}

One possible complication in working directly with $\mathcal B^\star(\br;q)$ is a loss of uncertainty in the class labels.  Given $P(\br)$ and $N(\br)$, Lemma \ref{lem:optclass} guarantees an optimal way of classifying data, and for any partition $U$, $Z(\br;q,U)$ quantifies point-wise accuracy.  Yet how can we reconstruct $Z(\br;q,U)$ given only a classification boundary, or more generally guarantee that we have not lost information about $P(\br)$ and $N(\br)$?  The following lemma speaks to this issue and is central to later sections.

\begin{lemma} 
Let $\mathcal E$ be given by Eq.\ \eqref{eq:error}, and assume that up to a set of measure zero with respect to $Q(\br;q)$, there exists a partition $U^\dagger=\{D_p^\dagger,D_n^\dagger,B^\dagger\}$ with a boundary set $B^\dagger$ such that: 
\begin{itemize}
\item[i.] $D_p^\dagger \cap D_n^\dagger = D_p^\dagger \cap B^\dagger = D_n^\dagger \cap B^\dagger = \emptyset$; 
\item[ii.] $D_p^\dagger \cup D_n^\dagger \cup B^\dagger = \Gamma$; and 
\item[iii.] $\mathcal E[D_p^\dagger\cup B^\dagger_p,D_n^\dagger\cup B^\dagger_n]$ is minimized for an arbitrary partition $\{B^\dagger_p,B^\dagger_n\}$  of $B^\dagger$.
\end{itemize}
Then $U^\dagger$ is in the equivalence class of partitions given by Eqs.\ \eqref{eq:Dp} and \eqref{eq:Dn}. \label{lemma1}
\end{lemma}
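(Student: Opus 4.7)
The strategy is to exploit the arbitrariness in condition (iii) to force $B^\dagger$ to lie in $B^\star(q)$ up to measure zero, and then to apply Lemma \ref{lem:optclass} once to identify $D_p^\dagger$ and $D_n^\dagger$ with the strict-inequality sets in Eqs. \eqref{eq:Dp} and \eqref{eq:Dn}.

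For the first step, I would expand the classification error using disjointness and additivity of the measures $N_{(\cdot)}$ and $P_{(\cdot)}$:
\begin{align*}
\mathcal{E}[D_p^\dagger \cup B_p^\dagger,\, D_n^\dagger \cup B_n^\dagger;q] = (1-q) N_{D_p^\dagger} + q P_{D_n^\dagger} + (1-q) N_{B_p^\dagger} + q P_{B_n^\dagger}.
\end{align*}
The first two terms are fixed by $U^\dagger$, so the requirement in (iii) that the total be minimized (hence constant) across all partitions $\{B_p^\dagger, B_n^\dagger\}$ of $B^\dagger$ forces $(1-q)N_{B_p^\dagger} + qP_{B_n^\dagger}$ to be constant. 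Swapping any measurable $A \subseteq B^\dagger$ from $B_n^\dagger$ to $B_p^\dagger$ shifts this quantity by $(1-q)N_A - qP_A$, so this shift must vanish for every measurable $A \subseteq B^\dagger$. Hence $(1-q)N(\br) = qP(\br)$ almost everywhere on $B^\dagger$, i.e.\ $B^\dagger \subseteq B^\star(q)$ up to $Q$-measure zero.

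For the second step, I would fix any admissible partition, say $B_p^\dagger = B^\dagger$ and $B_n^\dagger = \emptyset$. By (iii), the binary classifier $(D_p^\dagger \cup B^\dagger,\, D_n^\dagger)$ minimizes $\mathcal{E}(\cdot;q)$, so Lemma \ref{lem:optclass} places it in the equivalence class of Eqs. \eqref{eq:Dp}, \eqref{eq:Dn}. Modulo measure zero this yields $\{qP > (1-q)N\} \subseteq D_p^\dagger \cup B^\dagger$ and $\{qP < (1-q)N\} \subseteq D_n^\dagger$; the reverse containments $D_p^\dagger \cup B^\dagger \subseteq \{qP \geq (1-q)N\}$ and $D_n^\dagger \subseteq \{qP \leq (1-q)N\}$ follow from a pointwise-minimization argument on $\mathcal{E}$, since misassigning a positive-$Q$-measure subset of either strict-inequality region would strictly decrease the error upon swap. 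Combined with $B^\dagger \subseteq B^\star(q)$ from step one, these give $D_p^\dagger = \{qP > (1-q)N\} \cup (D_p^\dagger \cap B^\star(q))$ and $D_n^\dagger = \{qP < (1-q)N\} \cup (D_n^\dagger \cap B^\star(q))$ up to measure zero. Taking $B_p^\star = (D_p^\dagger \cap B^\star(q)) \cup B_p^\dagger$ and $B_n^\star = (D_n^\dagger \cap B^\star(q)) \cup B_n^\dagger$ for any choice of $\{B_p^\dagger, B_n^\dagger\}$ then exhibits $U^\dagger$ as a representative of the equivalence class in Lemma \ref{lem:optclass}.

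The main obstacle is the measure-zero bookkeeping: specifically, verifying that Lemma \ref{lem:optclass} characterizes \emph{every} $\mathcal{E}$-minimizer (not just that the sets it writes down are minimizers). The two-sided identification rests on the strict inequalities defining $\{qP > (1-q)N\}$ and $\{qP < (1-q)N\}$, and on the absolute-continuity assumption, which rules out individual points contributing spurious mass. The remaining steps are routine manipulations of the integral representation of $\mathcal{E}$.
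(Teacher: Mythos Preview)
Your proof is correct in substance, though it takes a more indirect route than the paper. The paper argues by direct contradiction: assuming $U^\dagger$ differs from $U^\star$ on a set of positive $Q$-measure, it writes
\[
\mathcal E[D_p^\dagger,D_n^\dagger] - \mathcal E[D_p^\star,D_n^\star] = \int_{D_n^\star \setminus D_n^\dagger}\!\!\big[(1-q)N - qP\big]\,\d\br \;+\; \int_{D_p^\star \setminus D_p^\dagger}\!\!\big[qP-(1-q)N\big]\,\d\br \;>\; 0,
\]
using only that $D_p^\dagger \setminus D_p^\star = D_n^\star \setminus D_n^\dagger$ (since both are partitions of $\Gamma$) together with the strict inequalities defining $D_p^\star$ and $D_n^\star$. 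The positive-measure $B^\dagger$ case is then dispatched in one sentence by the same mechanism. Your two-step argument instead first exploits the arbitrariness in (iii) to pin $B^\dagger$ inside $B^\star(q)$, and only then runs what is essentially the paper's swap argument on $D_p^\dagger$ and $D_n^\dagger$. This buys you an explicit structural statement about $B^\dagger$ that the paper leaves implicit, at the cost of some extra bookkeeping.

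One point worth tightening: your appeal to Lemma~\ref{lem:optclass} in step two to obtain the ``forward'' containments $\{qP > (1-q)N\} \subseteq D_p^\dagger \cup B^\dagger$ and $\{qP < (1-q)N\} \subseteq D_n^\dagger$ is, as you yourself flag, circular---that lemma only asserts that $U^\star$ is \emph{a} minimizer, not that every minimizer has this form. However, this does no real damage, because by complementation those forward containments are \emph{logically equivalent} to the reverse containments $D_p^\dagger \cup B^\dagger \subseteq \{qP \ge (1-q)N\}$ and $D_n^\dagger \subseteq \{qP \le (1-q)N\}$, which you do establish independently via the swap argument. So the invocation of Lemma~\ref{lem:optclass} is redundant rather than a gap; you could simply delete it and let the pointwise-minimization argument carry both directions at once, which is exactly what the paper does.
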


\begin{proof}  Consider first the case where $B^\dagger$ is a set of measure zero, and assume the result is false.  Then there exists a set of positive measure with respect to $Q(\br;q)$ for which $D_p^\dagger$ and/or $D_n^\dagger$ differs from $D_p^\star$ and $D_n^\star$.  Consider the difference 
\begin{align}
\mathcal E[D_p^\dagger,D_n^\dagger] \!-\! \mathcal E[D_p^\star,D_n^\star]  \!=\! 
 \int_{D_p^\dagger / D_p^\star} \hspace{-8mm} (1\!-\!q)N(\br) \, {\rm d} r \!-\! \int_{D_n^\star / D_n^\dagger} \hspace{-8mm} qP(\br) \, {\rm d} r   \!+\!\int_{D_n^\dagger / D_n^\star} \hspace{-8mm} qP(\br) \, {\rm d} r \!-\! \int_{D_p^\star / D_p^\dagger} \hspace{-8mm} (1\!-\!q)N(\br) \, {\rm d} r. \nonumber
\end{align}
Note also that $D_p^\dagger / D_p^\star = D_n^\star / D_n^\dagger$ and $D_n^\dagger / D_n^\star = D_p^\star / D_p^\dagger$, since $U^\dagger$ and $U^\star$ are both partitions of $\Gamma$.  Thus
\begin{align}
\mathcal E[D_p^\dagger,D_n^\dagger] - \mathcal E[D_p^\star,D_n^\star]  = \int_{D_n^\star / D_n^\dagger} \hspace{-8mm} (1-q)N(\br) - q P(\br)\, {\rm d} r   +\int_{D_p^\star / D_p^\dagger} \hspace{-8mm} qP(\br) - (1-q)N(\br) > 0,\nonumber
\end{align}
where the last inequality is a consequence of the definitions of $D_p^\star$ and $D_n^\star$.  Thus, we find $\mathcal E[D_p^\star,D_n^\star] < \mathcal E[D_p^\dagger,D_n^\dagger]$, which is a contradiction.  

The case for which $B^\dagger$ has positive measure follows by analogous arguments, taking into account that this set can be arbitrarily partitioned between the positive and negative classification domains.  
\end{proof}

Lemma \ref{lemma1} is the converse of the Lemma \ref{lem:optclass}.  Lemma \ref{lemma1} states that the equivalence class of partitions minimizing $\mathcal E(U;q)$ is the same as the equivalence class  $U^\star$ defined explicitly in terms of $P(\br)$ and $N(\br)$.  Perhaps unsurprisingly, this implies that any partition $U^\dagger$ minimizing $\mathcal E$ is as good as invoking Lemma \ref{lem:optclass}, no matter how we find $U^\dagger$.  More importantly, however, $D_p^\dagger$ and $D_n^\dagger$ must preserve inequalities of the form $qP(\br) > (1-q)N(\br)$ etc., even though the conditional PDFs need not be used to construct these sets.  In this context, the importance of Eq.\ \eqref{eq:isomorpha} is difficult to overstate.  In Sec.\ \ref{sec:UQ}, we show that the ability to construct $B^\star(q)$ is central to extracting uncertainties in class labels.  

However, it is necessary to first develop a practical method for constructing $B^\star(q)$.  To this end, the following definition and postulates are useful.
\begin{definition}[Boundary Classifier]
Let $\mathcal B(\br;\phi(q))$ depend on some parameters $\phi(q)$.  We define $\hat C_{\mathcal B}(\br;q) = \mathcal H(\mathcal B(\br;\phi(q)))$ to be a {\bf boundary classifier}, where $\mathcal H$ is the Heaviside step-function and $\mathcal H(0)$ can arbitrarily be assigned the value $0$ or $1$. \label{def:bclass}
\end{definition}

Clearly the locus of points $\mathcal B(\br;\phi(q))=0$ plays a special role in our analysis, and our aim is to understand in what sense $\mathcal B(\br;\phi(q)) = \mathcal B^\star(\br;q)$.  A key challenge, however, is that the locus of points satisfying $\mathcal B(\br;\phi(q))=0$ can differ from $B^\star(q)$ on a set of measure zero, which is especially problematic if the latter is itself a set of measure zero!
The resolution to this problem is to postulate that such differences do not violate the structure of Eqs.\ \eqref{eq:Dp} and \eqref{eq:Dn}.

\begin{definition}
We say that $U^\dagger(q)=\{D_p^\dagger,D_n^\dagger,B^\dagger\}$ satisfies the {\bf optimal partition convention} if $U^\dagger(q)$: (i) is in the equivalence class minimizing $\mathcal E(U;q)$; and (ii) ${\br \in D_p^\dagger(q) \iff qP(\br) \ge (1-q)N(\br)}$ and ${\br \in D_n^\dagger(q) \iff qP(\br) \le (1-q)N(\br)}$.  \label{def:optconv}
\end{definition}
 
\begin{remark}
Definition \ref{def:optconv} implies that all $\br$ are classified correctly \textit{pointwise}.  This is needed to avoid situations in which, for example, a set of measure zero satisfying $qP(\br) > (1-q)N(\br)$ is assigned to $D_n^\star(q)$.  While such partitions do not affect the average classification accuracy, they clearly affect pointwise UQ.  Note that $qP(\br) = (1-q)N(\br) \nRightarrow \br \in B^\dagger(q)$.   
\end{remark}

\begin{postulate}[Weak Level Set]
A classifier minimizing $\mathcal E(U;q)$ yields a partition that satisfies the optimal partition convention.  
\end{postulate}

The WLS postulate does not guarantee that a boundary function identifies all $\br \in B^\star(q)$.   From a modeling standpoint, this suggests a stronger postulate.

\begin{postulate}[Strong Level-Set]
A boundary classifier minimizing $\mathcal E(U;q)$ satisfies the WLS postulate and $\mathcal B(\br;\phi(q))=\mathcal B^\star(\br;q)$ for all $q\in (0,1)$.  
\end{postulate}

%The SLS postulate nominally serves as the basis for our analysis, since it motivates us to seek a boundary function minimizing $\mathcal E(U;q)$.  In Sec.\ \ref{sec:UQ}, however, we show that this is critical for estimating $Z^\star(\br;q)$.  

\section{Classification Without PDFs}
\label{sec:classification}

\subsection{General Formulation of the Classification Method}
\label{subsec:gentheory}

Lemma \ref{lemma1} and the SLS postulate do not tell us how to find $B^\star$; one cannot avoid a modeling choice.  Assume under the SLS postulate that an arbitrary boundary $B$ is defined as the locus of points satisfying a general nonlinear equation  
\begin{align}
\mathcal B(\br;\phi)=0 \label{eq:gennonlinear}
\end{align}
where $\phi$ are parameters that determine the shape of the boundary.  Consistent with Def.\ \ref{def:bclass}, define the domains $D_p$ and $D_n$ via the inequalities $\mathcal B(\br;\phi) > 0$ and $\mathcal B(\br;\phi) < 0$, respectively.  Here we present the general framework and considerations for estimating $\mathcal B(\br;\phi)$; specific models used in this work are discussed in Sec.\ \ref{subsec:quadratic}.  

\begin{remark}
We assume that $\B$ is twice differentiable in $\phi$, which permits the use of Newton's Method and related approaches in subsequent calculations \cite{Nocedal,NA}.
\end{remark}

To determine the $\phi$, we require a rigorous concept of {\it training data}.  In particular:
\begin{definition}[Pure Training Datapoint]
Let $\omega$ be a sample point, and let $C(\omega)$ be the true class of $\omega$.  The pair $(C(\omega),\br(\omega))$ is a {\bf pure training data point.}  %We also express this using the equivalent notation $r_C$. \label{def:pure}
\end{definition}

\begin{definition}[Pure Training Population]
Assume that a population has two classes indexed by $k$, so that $k\in \{0,1 \}$.  Let $n_k$ be positive integers, and let $\omega_{k,i}$ be distinct sample points in $\Omega$.  Then the collection of sets indexed by $k$ and defined via
\begin{align}
\Pi=\big\{\Pi_k \big\},  && 
\Pi_k = \{\br_{i}\!:\!\br_i=\br(\omega_{k,i}),i\in \{1,...,n_k\},C(\omega_{k,i})=k \}_k
\end{align}
is a {\bf training population}.  We refer to the sets $\Pi_k$ as {\it training sets} associated with population $k$.  For brevity, we denote the $i$th element of the $k$th class by symbol $\br_{k,i}$.\label{def:pure}
\end{definition}

%\begin{remark}
%The assumption that $n_k >0$ for every $k$ is important.  This can be interpreted as a variation of the requirement that a  well-posed linear system have the same number of equations as unknowns.  Section \ref{sec:impure} further explores this analogy by uncovering a variation of {\it linear independence} among the training sets of a training population.
%\end{remark}

Training data and classifiers are ultimately used to deduce the properties of test population.  We are usually only given a finite number of samples from such a population, which motivates the following definition:
\begin{definition}[Empirical Test Population]
Let $\Omega$ be a sample space and $C(\omega)$ a binary random variable for $\omega \in \Omega$.  Let $q_s\in [0,1]$, and the $\omega_i$ are {\it iid}.    We define 
\begin{align}
\Psi(q_s,s)=\left\{ \br(\omega_i):i\in \{1,...,s\}, \sum_{i=1}^s\frac{C(\omega_i)}{s} = q_s \right\} \label{eq:impureset} 
\end{align}
to be an {\bf empirical test population} with $s$ samples and prevalence $q_s$.  When there is no risk of confusion, we also refer to $\Psi$ more simply as a test population.
\end{definition}
\begin{remark}
It is important to observe that the prevalence $q_s$ of an empirical test population may not be identical to the $q$ appearing in Eq.\ \eqref{eq:qdef}.  The latter may take any value in $[0,1]$, whereas $q_s$ is limited to rational numbers of the form $n_1/(n_0+n_1)$.  Only in the limit $n_0+n_1\to \infty$ do we expect the two numbers to coincide.  
\end{remark}

%Pure training populations provides unambiguous information for determining $\phi$, although in Sec.\ \ref{sec:impure} we show that a more general concept of impure training data is sometimes sufficient.  
Our goal is to use $\Pi$ to find the boundary function that minimizes the empirical classification error.  This can then be used to classify the samples in a test population $\Psi(q,s)$.  However, in light of Sec.\ \ref{subsec:bg}, we need a method for estimating $q$, since $P(\br)$ and $N(\br)$ are no longer assumed to be known.  This motivates the following.
\begin{definition}[Empirical Prevalence Estimate]
Let $\Pi$ be a pure training population, and let $D\subset \Gamma$ be any subdomain chosen such that the empirical estimates
\begin{align}
\tilde N_D = \frac{1}{n_0}\sum_{j=1}^{n_0} \mathbb I(\br_{0,j} \in D), &&
\tilde P_D = \frac{1}{n_1}\sum_{j=1}^{n_1} \mathbb I(\br_{1,j} \in D)
\end{align}
are not equal, where $\br_{i,j}\in \Pi_i$.  Moreover, assume that $P_D \ne N_D$, and define $\tilde Q_D$ according to Eq.\ \eqref{eq:prevest} in terms of the empirical test population.  Then we call 
\begin{align}
\hat q = \frac{\tilde Q_D - \tilde N_D}{\tilde P_D - \tilde N_D} \label{eq:prevest_emp}
\end{align}
the {\bf empirical prevalence estimate} of $q$.  \label{constr:prev_emp}
\end{definition}

\begin{lemma}
Assume there exists a positive-measure set $D$ for which $P(\br)>N(\br)$, and let $q \in (0,1)$.  Let $\hat q$ denote the empirical prevalence estimate, and assume that the number of training points $n_k \ge n$ for all classes $k$ and some $n > 0$; see Def.\ \ref{def:pure}.  Let $s \ge n$ denote the number of test points. Then the following statements are true:
\begin{itemize}
\item[i.] $\hat q$ exists and is positive, both almost surely as $n\to\infty$;
\item[ii.] $\hat q$ converges to Eq.\ \eqref{eq:prevest} at a rate of $n^{-1/2}$ as $n\to \infty$;
\item[iii.] $\hat q$ is an asymptotically unbiased estimator of $q$.
\end{itemize}\label{lem:prev_emp}
\end{lemma}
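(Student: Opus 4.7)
The plan is to exploit the fact that each of $\tilde N_D$, $\tilde P_D$, and $\tilde Q_D$ is a sample mean of bounded iid Bernoulli indicators, and to transfer their statistical properties to $\hat q$ via the smoothness of the ratio $(x-y)/(z-y)$ away from its singular set. Writing $A = P_D - N_D$ and $\alpha = \tilde P_D - \tilde N_D$, everything comes down to controlling the four fluctuations $\alpha - A$, $\tilde N_D - N_D$, $\tilde P_D - P_D$, and $\tilde Q_D - Q_D$ simultaneously, and then comparing $\hat q$ to the unbiased estimator $\tilde q$ of Lemma \ref{lem:qprev}.

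For (i), I first apply the strong law of large numbers: as $n \to \infty$, $\tilde N_D \to N_D$, $\tilde P_D \to P_D$, and $\tilde Q_D \to Q_D = qP_D + (1-q)N_D$ almost surely. Because $P_D > N_D$ by hypothesis, $\alpha \to A > 0$ almost surely, so the denominator of $\hat q$ is eventually strictly positive. The numerator $\tilde Q_D - \tilde N_D \to q(P_D - N_D) > 0$ almost surely as well, so $\hat q$ exists and is positive almost surely as $n \to \infty$.

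For (ii), I carry out the algebra
\begin{align*}
\hat q - \tilde q \;=\; \frac{\tilde Q_D\,(A-\alpha) + N_D\,(\tilde P_D - P_D) - P_D\,(\tilde N_D - N_D)}{A\,\alpha}.
\end{align*}
A Hoeffding-type bound (or the CLT) applied to the bounded iid indicator sums shows that $A - \alpha$, $\tilde P_D - P_D$, $\tilde N_D - N_D$, and, using $s \ge n$, also $\tilde Q_D - Q_D$, are each $O_p(n^{-1/2})$, while the prefactors $\tilde Q_D$, $N_D$, $P_D$ lie in $[0,1]$. The denominator converges almost surely to $A^2 > 0$ and is therefore bounded away from zero on an event of probability approaching $1$. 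Combining these facts gives $\hat q - \tilde q = O_p(n^{-1/2})$, and since Lemma \ref{lem:qprev} shows that $\tilde q$ itself converges to $q$ at rate $n^{-1/2}$, so does $\hat q$.

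For (iii), the main obstacle, and the step I expect to be the most delicate, is that $\hat q$ has a random denominator, so convergence in probability does not automatically upgrade to convergence in mean. My plan is to split $E[\hat q] = E[\hat q \, \mathbb I(G_n)] + E[\hat q \, \mathbb I(G_n^c)]$ with $G_n = \{\alpha > A/2\}$. On $G_n$ the quantity $|\hat q| \le 2/A$ is uniformly bounded, so dominated convergence together with $\hat q \to q$ in probability yields $E[\hat q \, \mathbb I(G_n)] \to q$. For the complement, Hoeffding's inequality gives $\Pr[G_n^c] \le C e^{-cn}$, and combining this exponential tail with a mild truncation (or a crude polynomial bound on $|\hat q|$ restricted to the event $\alpha \ge 1/n$) shows $E[\hat q \, \mathbb I(G_n^c)] \to 0$. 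Hence $E[\hat q] \to q = E[\tilde q]$, i.e.\ asymptotic unbiasedness. Getting the tail bound on $G_n^c$ to cleanly vanish, without losing control of the small denominator, is where most of the real technical work will sit.
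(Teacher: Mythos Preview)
Your proposal is correct and actually more careful in places than the paper's own argument, though the two take slightly different routes. For (i), the paper bounds $\Pr[\tilde P_D - \tilde N_D \le 0]$ explicitly via Hoeffding's inequality and then invokes Borel--Cantelli to conclude the bad event occurs only finitely often; your direct appeal to the strong law of large numbers reaches the same conclusion with less machinery. For (ii), the paper simply cites the Central Limit Theorem together with a Taylor expansion of the denominator (i.e.\ the Delta method), whereas you perform the algebraic decomposition of $\hat q - \tilde q$ explicitly and bound each term; these are the same idea, with your version being more hands-on. The most notable difference is (iii): the paper asserts in one line that asymptotic unbiasedness ``follows directly from ii,'' which strictly speaking requires a uniform-integrability step to pass from convergence in probability to convergence of expectations. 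You correctly flag this gap and close it with the good-event/bad-event split $G_n = \{\alpha > A/2\}$, dominated convergence on $G_n$, and the exponential Hoeffding tail on $G_n^c$. So your approach buys additional rigor on (iii) at the cost of a bit more work, while the paper's Hoeffding/Borel--Cantelli argument for (i) yields explicit tail rates that your SLLN shortcut does not.
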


\begin{remark}
In the proof of Lemma \ref{lem:prev_emp}, we temporarily assume the sample space corresponds to a set of countably infinite measurements on individuals $\omega\in \Omega$.  Thus, the sample space referenced below is distinct from $\Omega$ described in Sec.\ \ref{sec:setting}.
\end{remark}

\begin{proof}
To prove i.\ note first that $P_D > N_D$.  Let $\epsilon = (P_D - N_D)/4$.  Clearly if $\tilde P_D > P_D - \epsilon$ and $\tilde N_D < N_D + \epsilon$, then $\hat q$ exists.  Thus, we seek to show that these inequalities hold almost surely.

Consider first the case for which $n_0=n_1=s=n$, and recognize that $n\tilde P_D$ and $n\tilde N_D$ are binomial random variables whose means are $nP_D$ and $nN_D$, respectively.  Let $\mathcal A$ be the event for which $P_D - N_D \le 0$, $\mathcal A_P$ be the event for which $\tilde P_D \le P_D - \epsilon$, and $\mathcal A_N$ be the event for which $\tilde N_D \ge N_D + \epsilon$.  Clearly $\mathcal A \subset \mathcal A_P \cup \mathcal A_N$.  This implies 
\begin{align}
\Pr[\mathcal A] \le \Pr[\mathcal A_P] + \Pr[\mathcal A_N]. \label{eq:measure_sep}
\end{align}

Consider next $\Pr[\mathcal A_P]$, which is the probability of at most $k \le nP_D - n\epsilon$ successes given $n$ trials.  By Hoeffding's inequality \cite{Hoeffding},
\begin{align}
\Pr[\mathcal A_P] &  \le \exp\left[-2n \left(P_D - \frac{nP_D - n\epsilon}{n} \right) \right] = e^{-2n\epsilon}.
\end{align}
Similarly, one finds $\Pr[\mathcal A_N] \le \exp(-2n\epsilon)$.  Inequality \eqref{eq:measure_sep} therefore implies 
\begin{align}
\Pr[\mathcal A] \le 2\exp(-2n\epsilon).
\end{align}

Consider next a sequence of events $\mathcal A_n$ for which $\tilde P_D - \tilde N_D \le 0$.  One finds
\begin{align}
\sum_{n=1}^\infty \Pr[\mathcal A_n] \le \sum_{n=1}^\infty 2\exp(-2n\epsilon) < \infty.
\end{align}
Therefore, by the Borel-Cantelli lemma \cite{Borel,Cantelli}, the event for which $\tilde P_D - \tilde N_D \le 0$ occurs infinitely often has measure zero.  That is there exists almost surely a finite value of $n$ for which $\tilde P_D - \tilde N_D$ is guaranteed to be positive.  As $P_D - N_D$ is positive by assumption, the $\hat q$ so constructed exists and is finite.  A similar argument applied to the numerator of Eq.\ \eqref{eq:prevest_emp} proves that there exists an $n$   for which $\tilde Q_D - \tilde N_D>0$.  

To prove the result for the case in which the number of training and test points are not equal, let $n$ be the minimum among $n_0$, $n_1$, and $s$.  Then construct $\hat q$ using $n$ points from each training and test set.  This reduces the problem to the previous case.   

Assertion ii.\ is a consequence of the Central Limit Theorem.  That is, as $n\to \infty$, $\tilde P_D$ and $\tilde N_D$ converge to normal random variables whose variances go as $P_D(1-P_D)/n$ and $N_D(1-N_D)/n$.  As $\hat q$ is guaranteed to exist for some finite value of $n$, a Taylor expansion of the denominator in Eq.\ \eqref{eq:prevest_emp} yields the desired rate of convergence; see also the Delta method \cite{Doob}.  Note that assertion iii.\ follows directly from ii.  
\end{proof}

\begin{remark}
Lemma \ref{lem:prev_emp} only states that \textit{with sufficient data} and a reasonable choice of $D$, $\hat q$ exists and is positive definite.   In such cases, it should be possible to estimate the $D$ used in Lemma \ref{lem:prev_emp} by computing the $q=1/2$ classification domain via the method presented at the end of this section.  Based on the Central Limit Theorem, is also straightforward to show that $n \gg [q(P_D - N_D)]^{-2}$ characterizes the number of datapoints for which we can expect the estimator $\hat q$ to be well-behaved with high probability.  Thus, prevalence values close to $0$ or $1$ require a significant amount of data to resolve, as do measurements for which $P_D \approx N_D$.  
\end{remark}

Given a method for estimating the prevalence, we are now in a position to define the objective function whose minimum is our classification boundary of interest.  
\begin{definition}[Binary Empirical Error]
Let $\Pi$ be a pure training population with $2$ classes.  Then the {\bf prevalence-weighted empirical error} $\mathcal E_e\left[U;\Pi,q \right]$ is
\begin{align}
\mathcal E_e\left[U;\Pi,q \right] &= \frac{q}{n_p}\sum_{i=1}^{n_p}\mathbb I(\br_{p,i} \in D_n) + \frac{1-q}{n_n}\sum_{i=1}^{n_n}\mathbb I(\br_{n,i} \in D_p).\label{eq:emperror}
\end{align}
\end{definition}

\begin{remark}
The prevalence $q$ appearing in Eq.\ \eqref{eq:emperror} is either associated with a test population or left as a free parameter.  We do not generally fix {${q=n_p/(n_p+n_n)}$}, which is the prevalence of the training population.  
\end{remark}

We justify Eq.\ \eqref{eq:emperror} by observing that it is a Monte Carlo estimate of the expected error \eqref{eq:error} \cite{montecarlo}.  By Def.\ \ref{def:bclass}, it is also easy to see that the empirical error can be re-expressed in terms of the Heaviside step function $\mathcal H(x)$ via
\begin{align}
\mathcal E_e\left[\phi;\Pi,q \right] &= \frac{q}{n_p}\sum_{i=1}^{n_p}\left[1-\mathcal H(\mathcal B(\br_{p,i};\phi))\right]  + \frac{1-q}{n_n}\sum_{i=1}^{n_n}\mathcal H(\mathcal B(\br_{n,i};\phi)), \label{eq:reemp}
\end{align}
where, in a slight abuse of notation, we use the same symbol $\mathcal E_e$ in Eqs.\ \eqref{eq:emperror} and \eqref{eq:reemp}.  Used as an objective function for determining $\phi$, however, Eq.\ \eqref{eq:reemp} can yield an uncountably infinite number of partitions $U$ with the same empirical error.  Perhaps worse, for finite $n_n$ and $n_p$, this objective function has a gradient (with respect to $\phi$) that is everywhere zero or infinite.  This motivates the following modification.
\begin{definition}[Regularized Empirical Error]
Let $\Pi$ be a pure training population with two classes, and let $H(x)$ be a strictly monotone increasing sigmoid function for which: the domain is $\mathbb R$; the range is $[0,1]$; and $H(0)=1/2$.  Then the {\bf regularized empirical error} $\mathcal L\left[\phi;\Pi,q,\sigma^2 \right]$ is
\begin{align}
\mathcal L \left[\phi;\Pi,q,\sigma^2 \right] &= \frac{q}{n_p}\sum_{j=1}^{n_p}\left[ 1-H \left( \frac{\mathcal B(\br_{p,j};\phi)}{\sigma^2} \right) \right]  + \frac{1-q}{n_n} \sum_{j=1}^{n_n} H \left( \frac{\mathcal B(\br_{n,j};\phi)}{\sigma^2} \right). \label{eq:empiricalobjective}
\end{align}
\end{definition}

\begin{remark} In this manuscript, we take $H(x)$ to be $H(x) = 0.5[1+{\rm tanh}(x)]$. \end{remark}

  To understand the importance the regularization parameter $\sigma^2$ in Def.\ \eqref{eq:empiricalobjective}, note that in the limit $\sigma\to 0$,  Eq.\ \eqref{eq:empiricalobjective} reverts to the empirical classification error of Eq.\ \eqref{eq:emperror}.  Positive values of $\sigma^2$ therefore play two related roles.  First, $\sigma$ renders the objective function continuous and differentiable in $\phi$, provided $\mathcal B$ has these properties.  Second, for vanishing values of $\sigma$, the objective function is clearly non-convex and may have many local minima.  Informally speaking, large values of $\sigma$ help to ``convexify'' $\mathcal L$, in essence expanding the radius of convergence of the optimization algorithm.  However, this begs the question: how does one choose $\sigma$?

We address this conundrum by treating $\sigma$ as a homotopy parameter; see Refs.\ \cite{Homotopy1,Homotopy2,Homotopy3,Homotopy4,Homotopy5} for background and related approaches.  Our goal is to reach the limit $\sigma \to 0$ while stabilizing the optimization, for which the latter requires $\sigma$ to be large.  This motivates the following definition.
\begin{definition}
Let $\{\sigma_j^2\}_{j=0}^{s}$ be a monotone decreasing sequence.  Let $\mathcal M$ be an optimization algorithm that maps the pair $\mathcal L$ and $\phi$ back onto the space of $\phi$, where $\mathcal M$ approximates the $\argmin$ operator.  Given an initial $\phi_0^\star$, we define the $\phi_{j+1}^\star$th {\bf homotopy classifier} to be the solution to the optimization problem  
\begin{align}
\phi_{j+1}^\star = \mathcal M [\mathcal L(\phi;\Pi,q,\sigma_j^2) ; \phi_j^\star], \label{eq:homotopy}
\end{align}
where $\phi_j^\star$ is used as initial point associated with the $(j+1)$th optimization problem. \label{def:homotopy}
\end{definition}

\begin{remark}
Observe that we may estimate any level set $\mathcal B(\br;\phi^\star(q))$ by varying $q$ in Eq.\ \eqref{eq:homotopy}.  However, the level-sets so constructed can potentially cross, which would lead to contradictions in the definitions of the underlying PDFs if interpreting $\mathcal B(\br;\phi^\star(q))$ as $\mathcal B^\star(\br;q)$.  Section \ref{sec:UQ} addresses this issue.
\end{remark}

The choices of $\{\sigma^2_j\}_{j=0}^{s}$ and $\phi_0^\star$ in Def.\ \ref{def:homotopy} are problem specific; in Sec.\ \ref{subsec:sars} we propose an algorithm for determining them.  However, taking $\sigma^2$ to be the minimum characteristic distance between datapoints should approximate the limit $\sigma \to 0$, provided $\mathcal B$ remains order one for all values of $\br$.  Moreover, we are not guaranteed a unique minimizer in the limit $\sigma \to 0$.  Herein, we never consider $\sigma^2 < 10^{-8}$.  

We are now in a position to construct our classifier for {\it test data}.

\begin{construction}[Test Data Homotopy Classifier]
Let $\Pi$ and $\Psi(q',s)$ denote training and test populations, and assume that the number of datapoints in both sets is sufficient to guarantee that $\hat q$ exists.  Let $\phi_0^\star$ denote an initial point and $\{\sigma_j^2\}_{1}^J$ be a sequence of homotopy parameters.  

To construct a classifier for the test data, set $q=1/2$ in Def.\ \ref{def:homotopy} to compute the $(J+1)$th homotopy classifier $\phi^\star_{J+1}$.  This yields a boundary function $\mathcal B(\br; \phi^\star_{J+1})$, which approximates the set $D$ of Lemma \ref{lem:prev_emp}.  Moreover, by convention, any point $\br_{p,i}$ for which $\mathcal B(\br_{p,i};\phi^\star_{J+1}) > 0$ contributes to the average $\tilde P_D$, and similarly for points from the negative training set and test population.  Using these estimates, construct $\hat q$ using Eq.\ \eqref{eq:prevest_emp}.  This quantity is an estimator of $q'$ in the sense of Lemma \ref{lem:prev_emp}.

Next set $q=\hat q$ in Def.\ \ref{def:homotopy} and compute the sequence of homotopy classifiers, which yields a new estimate of $\phi^\star_{J+1}$.  This quantity now estimates the optimal classification boundary associated with $q'$.  As in the previous set, use the new $\phi^\star_{J+1}$ to construct the classification boundary $\mathcal B(\br; \phi^\star_{J+1})$. \hfill {\small Q.E.F.} \label{constr:homotopy}
\end{construction}

\subsection{Quadratic Approximation}
\label{subsec:quadratic}

To illustrate the concepts developed in the previous section, we consider an important model of $\mathcal B$.

\begin{example}[Quadric Boundary Sets]
Assume that an arbitrary boundary set in $m$-dimensions can be represented as a generalized quadratic equation of the form
\begin{align}
B=\left\{\br:(\br,1)\A \binom{\br}{1} = 0 \right\} \label{eq:quadric}
\end{align} 
where $\A$ is an $(m+1)\times (m+1)$ matrix (for $\br\in \mathbb R^m$) of free parameters that depend on $q$.  That is,  identify $\phi$ as the matrix $\A$.  Letting $\chi^{\rm T}=(\br,1)$, this corresponds to the boundary function
\begin{align}
\mathcal B(\br;\A) = \chi^\T \A \chi. \label{eq:quadb}
\end{align}
Without loss of generality, supplement this with the symmetry constraint
\begin{align}
A_{i,j}=A_{j,i}. \label{eq:symmetry}
\end{align} \label{ex:quadB}
\end{example}

While the quadratic model is relatively simple, it supports efficient computation and generalization through vectorization.  This example also illustrates auxiliary assumptions and regularization needed to stabilize the analysis.  In particular, note that $\mathcal L$ as defined by Eq.\ \eqref{eq:empiricalobjective} has a plethora of connected sets when expressed in terms of $\A$.  Thus, the minimizer is not unique.    Among these is a duplication of the off-diagonal parameters in $\A$.  To see this, recognize that a quadratic in $m$ variables can be defined in terms of $(m+1)(m+2)/2$ parameters, although $\A$ has $(m+1)^2$.  The symmetry constraint given by Eq.\ \eqref{eq:symmetry} resolves this issue.% There are several ways to address this problem, e.g.\ by enforcing constraints on the off-diagonal terms.  Here we keep the extra degrees of freedom and consider a regularization term
%\begin{align}
%\mathcal L_{\rm sym}(\A) = \sum_{i,j}(A_{i,j}-A_{j,i})^2 \label{eq:regsym}
%\end{align}
%which simply removes the connected sets by symmetrizing $\A$.  It is straightforward to show that $\mathcal L_{\rm sym}$ can achieve a value of zero without altering the value of ${\mathcal L}$.  

A more troublesome connected set arises from the interplay between $\sigma$ and $\A$.  A general quadratic is more simply defined in terms of only $(m+1)(m+2)/2-1$ parameters, one less than fixed by a symmetry constraint.  The extra parameter amounts to a change of scale and is associated with the fact that we may divide all coefficients in a polynomial by any one of them.  Thus, without further regularization, minimization of Eq.\ \eqref{eq:empiricalobjective} yields a coefficient matrix $\A$ whose elements tend to diverge in a way that undoes the regularization associated with $\sigma^2$.  This arises from the fact that large values of $\sigma$ ``blur-out'' the individual data points, so that those near the boundary increase the objective function.  A solution to this problem is to impose additional regularization that fixes the scale.
\begin{definition}[Scale-Regularized Objective]
Let the quantity $\mathcal L_{\rm scale}$ be defined as
\begin{align}
\mathcal L_{\rm scale} = \left( \sum_{j=1}^{n_p}\frac{\mathcal B(\br_{p,j},\phi)^2}{n_p} +  \sum_{j=1}^{n_n}\frac{\mathcal B(\br_{n,j},\phi)^2}{n_n} - 1 \right)^2.  \label{eq:regscale}
\end{align}
Then we define the {\bf scale-regularized objective function} $\mathcal L_{sr}$ to be
\begin{align}
\mathcal L_{sr} = \mathcal L  + \mathcal L_{\rm scale}. \label{eq:objectivesum}
\end{align} \label{def:scalereg}
\end{definition}
%\begin{align}
%\mathcal L_{\rm scale} &=  \Bigg[ \left(\sum_{j=1}^{n_p}\frac{(\chi_{p,j}^\T \A \chi_{p,j})^2}{n_p}\right) \nonumber \\
%& \qquad + \left( \sum_{j=1}^{n_n}\frac{(\chi_{n,j}^\T \A \chi_{n,j})^2}{n_n}\right) - 1 \Biggr]^2.  \label{eq:regscale}
%\end{align}
\begin{remark}
The boundary set is the locus of points for which $\mathcal B^\star(\br;q)=0$.  Thus, Eq.\ \eqref{eq:regscale} requires that the sum of ``variances'' of the $\mathcal B(\br,\phi)$ (relative to $0$) be $\mathcal O(1)$.  This ensures that the argument of $H(x)$ in Eq.\ \eqref{eq:empiricalobjective} remains $\mathcal O(1/\sigma^2)$, so that $\sigma$ controls the regularization scale associated with the homotopy optimization.
\end{remark}

\section{Validation}
\label{sec:validation}

\subsection{Synthetic Data}
\label{subsec:syndata}
We test our analysis using 2D synthetic datasets, beginning  with a simple consistency check to ensure that the analysis converges when the classification boundaries are quadratic.  
\begin{example}[Parabolic-Gaussian Distributions]
Consider the conditional PDFs
\begin{align}
N(x,y)=g(x)g(y-x^2+3), && P(x,y)=g(x)g(y-x^2),  \label{eq:PandN}
\end{align}
where $g(x)$ is the PDF of the standard normal distribution (i.e.\ with mean zero and unit variance); see Figure \ref{fig:syndat}.  In light of Eqs.\ \eqref{eq:Dp} and \eqref{eq:Dn}, consider the ratio
\begin{align}
0 < \frac{N(x,y)}{P(x,y)} = \frac{g(y-x^2+3)}{g(y-x^2)} < \infty.
\end{align}
Clearly the locus of points satisfying the equation
\begin{align}
\frac{N(x,y)}{P(x,y)} = \frac{q}{1-q}
\end{align}
is a quadratic of the form $c=x^2-y$.
In particular, for $q=0.5$, the minimum-error classification boundary is given by
\begin{align}
\A^\star = \begin{bmatrix}
1 & 0 & 0 \\ 0 & 0 & -0.5 \\ 
0 & -0.5 & -1.5
\end{bmatrix}.\label{eq:obvmat}
\end{align}
\end{example}

\begin{remark}
It is well known that multivariate Gaussian distributions yield optimal classification boundaries that are quadrics.  In fact, this is the basis for discriminant analyses \cite{Venables}.  Equations \eqref{eq:PandN} are therefore notable insofar as they are {\it not} multivariate Gaussian distributions, as illustrated in Fig.\ \ref{fig:syndat}.  We return to this point in Sec.\ \ref{sec:discussion} when pursuing a deeper comparison with ML techniques.
\end{remark}

\begin{figure}\begin{center}
\includegraphics[width=12.5cm]{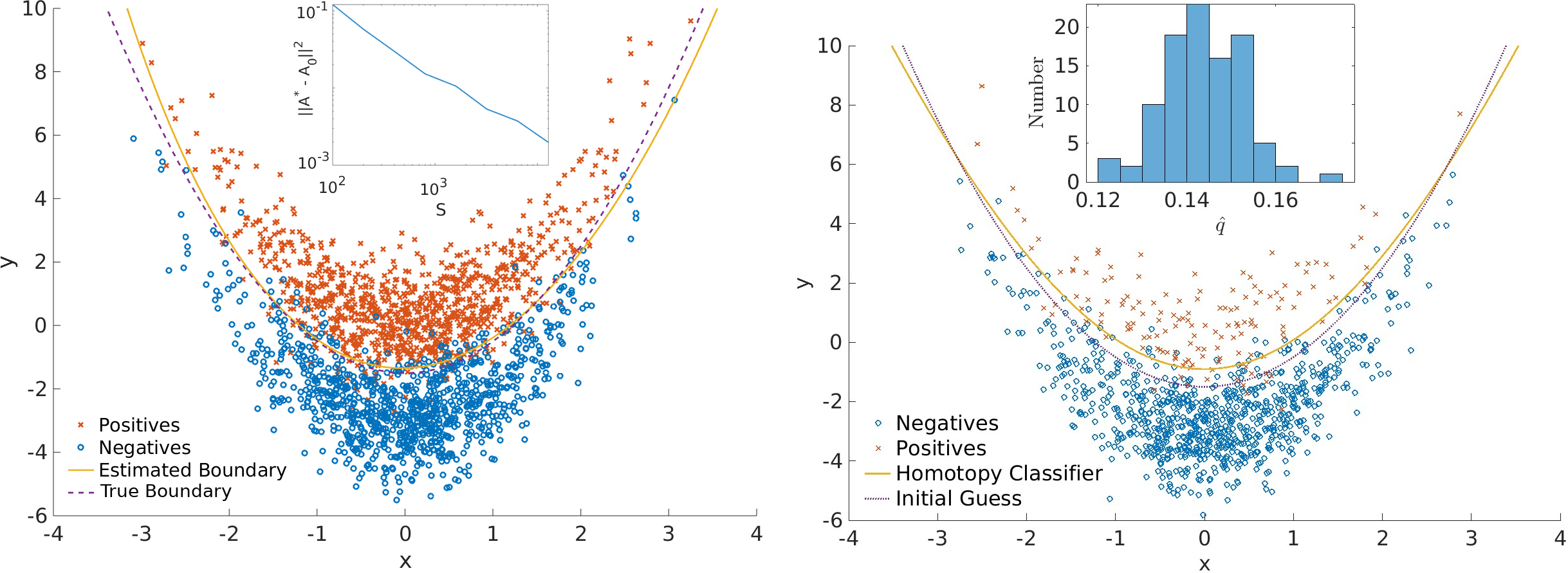}\end{center}\caption{{\it Left:}  Example of 2000 dimensionless, synthetic data points and analysis thereof.  Positive and negative data-points were generated according to the distributions given by Eqs.\ \eqref{eq:PandN}.  The estimated classification boundary is given by solving Eq.\ \eqref{eq:homotopy}.  The inset shows the average Frobenius norm $|| \A^\star - \A_{s+1}^\star||^2$ as a function of the number of sample points $\mathcal S$.  Note that the norm displays an approximate $1/S$ behavior consistent with convergence in mean-square.  {\it Right:} Illustration of the Homotopy Classifier; see Construction \ref{constr:homotopy}. Data was generated from the same distributions as in the left plot.  However, the true prevalence was $q=0.15$.  While the true classes of the datapoints are indicated in the figure, all data was combined and treated as test data for the purposes of this example.  All optimizations were done using a training population constructed in the same manner as the left plot.  Moreover, the same initial conditions and homotopy parameters were used to determine $\hat q$ (estimated to be $0.1525$ for the data shown) for the test population, after which the classification boundary was optimized using the empirical prevalence estimate.  The inset shows a histogram of $\hat q$ values associated with repeating the this numerical experiment 100 times.  }\label{fig:syndat}
\end{figure}

The left panel of Fig.\ \ref{fig:syndat} illustrates classification of synthetic data generated by Eqs.\ \eqref{eq:PandN} and analyzed by computing the homotopy classifier of the scale-regularized objective function with a quadratic boundary; see Eq.\ \eqref{eq:quadb} and Defs.\ \ref{def:homotopy} and \ref{def:scalereg}.  Here we set $q=0.5$ and fix a number $\mathcal S$ of total synthetic samples, with $n_n=n_p=\mathcal S/2$.  In the limit that $\mathcal S\to\infty$, we anticipate (but do not prove) that the optimal classification boundary should converge to Eq.\ \eqref{eq:obvmat}.  To test this, we consider values of $\mathcal S$ defined by $\mathcal S=200 \times 2^k$ for $k=0,1,...,7$.  For each of these values, we generate $M=100$ synthetic datasets drawn from the distributions given by Eqs.\ \eqref{eq:PandN} and evaluate the Frobenius norm
\begin{align}
|| \A^\star - \A_{s+1}^\star||^2 = \sum_{i}\sum_{j} [\A^\star - \A_{s+1}^\star]_{i,j}^2,
\end{align}
where the initial condition for the optimization is set to be $\A_0^\star = \A^\star$, and $\A_{s+1}^\star$ is determined by solving Eq.\ \eqref{eq:homotopy}.  We also fix the sequence of $\sigma^2$ values to be $\{\sigma^2_j=10^{-j}\}_{j=1}^5$.  The Frobenius norm tests for a degree of mean-squared convergence in $\A_{s+1}^\star$ as a function of the size $\mathcal S$ of the empirical distribution.  The inset to the left plot \ref{fig:syndat} shows $\mathcal S$ versus $|| \A^\star - \A_{s+1}^\star||^2$, which demonstrates the characteristic $1/\mathcal S$ rate of convergence.  While this example idealizes certain aspects of diagnostic classification, it suggests that the homotopy classifier (Def.\ \ref{def:homotopy}) can converge to the true solution, or at least an approximation thereof, under some circumstances.

The right panel of Fig. \ref{fig:syndat} shows an example wherein the same distributions were used to generate test data with a prevalence of $q=0.15$.  A positive and negative training set, each comprised of 1000 datapoints, were used as inputs to the scale-regularlized objective function. Construction \ref{constr:homotopy} was then used to estimate the prevalence of the test population and determine the classifier.  The inset shows that with 1000 datapoints in the test population, the estimate $\hat q$ has a characteristic standard deviation of about $0.015$.

\subsection{A SARS-CoV-2 ELISA Assay}
\label{subsec:sars}

\begin{example}[Detection of Previous Infection]\label{ex:sars}
A more realistic example arises in the context of a research-use-only COVID-19 enzyme-linked immunosorbent assay (ELISA). Since early 2020, such diagnostic tests have been used to detect the presence of anti-SARS-CoV-2 antibodies, which indicate a previous infection \cite{genantibody}.

Here we consider a diagnostic test developed in Ref.\ \cite{Raquel1} that detects immunoglobulin G (IgG) antibodies binding to the receptor binding domain (RBD), spike (S), and nucleocapsid (N) of the SARS-CoV-2 virus among pandemic, pre-vaccine era samples \citep{Raquel1}.  It is meaningful to identify positive samples in terms N and RBD measurements, since high titers of these antibodies are typically associated with a strong immune response to COVID-19 infection.  Training data is shown in Fig.\ \ref{fig:sarsdata}, along with the conics associated with the initial guess $\A_0^\star$ and final estimate $\A_{s+1}^\star$.  Note that Fig.\ \ref{fig:sarsdata} shows the data after transforming each dimension according to $x_i \to \ln(.01 + x_i-x_{i,{\rm min}})$, where $x_i$ is the $i$th coordinate associated with a measurement $\br$, and $x_{i,{\rm min}}$ is the minimum value of the $i$th coordinate taken across all of the negative measurements.  The $10^{-2}$ in the argument of the logarithm is a modeling choice that amounts to setting a finite origin for the data on a logarithmic scale.  
\end{example}

\begin{figure}
\includegraphics[width=13cm]{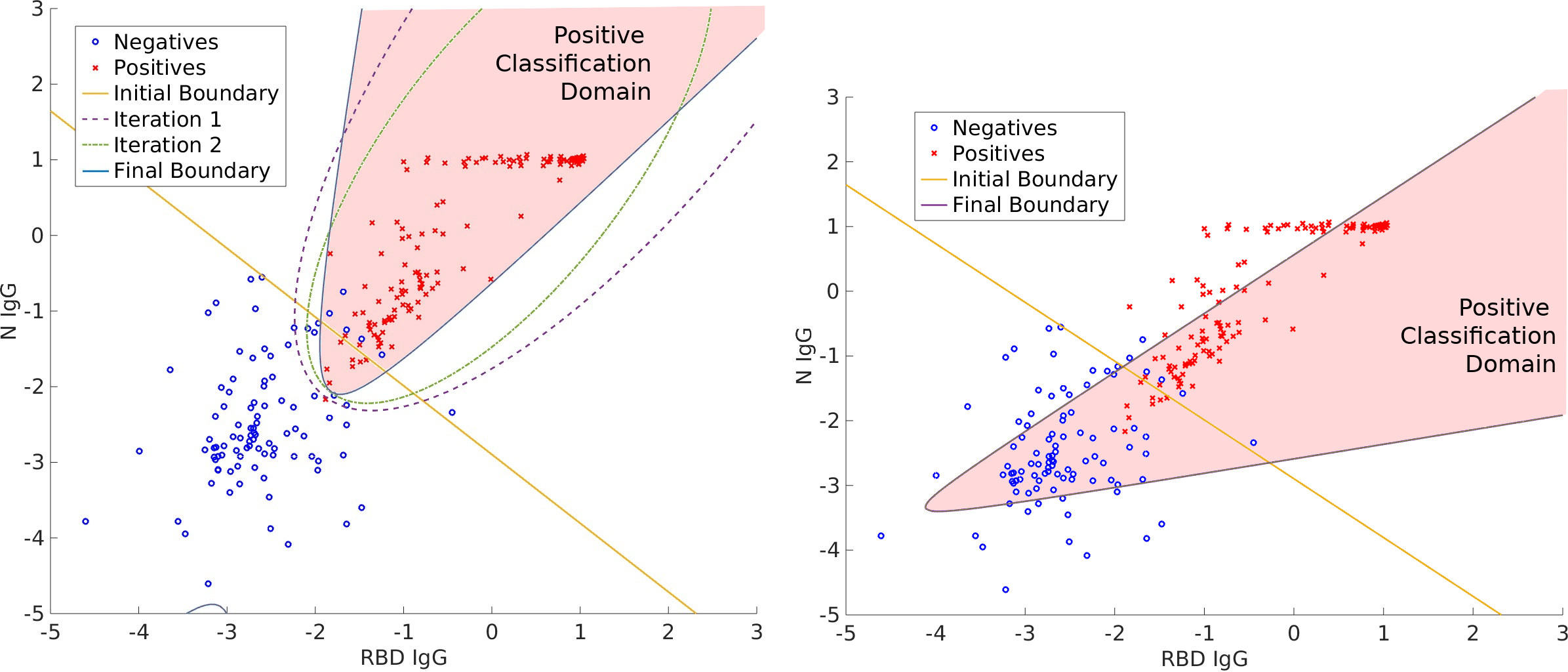}\caption{{\it Left:} Classification of 2D SARS-CoV-2 training data for $q=268/460$, the true prevalence of the dataset.  Negative samples were collected before the pandemic.  Positive samples were collected before the release of SARS-CoV-2 vaccines; thus, RBD is still a meaningful determinant of a positive sample.   The initial classification boundary is a straight line determined according to Eq.\ \eqref{eq:ao}.  The boundaries associated with the first two iterations of Eq.\ \eqref{eq:homotopy} are shown in dotted and dash-dot lines.  By the third iteration, the homotopy method has converged to a solution.  The corresponding value of $\sigma^2$ corresponds to $|\nu|/1000$.  {\it Right:} Example of optimization without using the homotopy method.  The initial guess of the optimization was the same as in the left plot.  We directly set $\sigma^2=10^{-8}$ and performed a single optimization via Eq.\ \eqref{eq:homotopy}.  The resulting positive classification domain poorly separates positive and negative populations.}\label{fig:sarsdata}
\end{figure}

In contrast to the previous example, identifying a reasonable initial guess $\A_0^\star$ requires care.  Intuitively it makes sense to define $\A_0^\star$ as the hyperplane (or in 2D, the line) that ``best'' separates the populations in some appropriate sense.  The following construction provides a reasonable definition.
\begin{construction}[Weighted Hyperplane Classifier]
Given a binary training population, compute the empirical means $\mu_p$ and $\mu_n$ of the positive and negative samples.  The vector $\nu=\mu_p - \mu_n$ separating these defines a direction, which we take to be perpendicular to the hyperplane of interest.  We need only identify a suitable origin $\nu_0$ to fully specify the hyperplane.  

To find $\nu_0$, compare the relative sizes of the distributions $P(\br)$ and $N(\br)$ in the direction of $\nu$.  Specifically, consider the empirical covariance matrices for 
\begin{align}
\Xi_k &= \frac{1}{n_k-1} \sum_{j=1}^{n_k} (\br_{k,j} - \mu_k)(\br_{k,j} - \mu_k)^{\rm T} ,
%\Xi_n &= \frac{1}{n_n-1} \sum_{j=1}^{n_n} (\br_{n,j} - \mu_n)(\br_{n,j} - \mu_n)^{\rm T}
\end{align}
for $k\in\{0,1\}$ (or $k=n$ or $k=p$); see Sec.\ \ref{subsec:notation}.  
%and let $v_{p,j}$, $v_{n,j}$ be the corresponding $j$th eigenvectors with eigenvalues $\lambda_{p,j}^2$ and $\lambda_{n,j}^2$.  Note that these are guaranteed to be semi-positive definite, justifying the square.       %Also let the eigenvectors be normalized and have positive projections in the direction of $\nu$.
Define the weights
\begin{align}
w_k = \sqrt{\nu^{\rm T} \Xi_k \nu}  &&
w_n = \sqrt{\nu^{\rm T} \Xi_n \nu}
\end{align}
which quantify the sizes of $P(\br)$ and $N(\br)$ in the direction of $\nu$. Then define $\nu_0$ as
\begin{align}
\nu_0 = \mu_n + \frac{w_n}{w_n+w_p}\nu
\end{align}
which is  a weighted center between the two distributions in the direction of $\nu$.  

The matrix form of $\A_0^\star$ follows immediately once $\nu_0$ and $\nu$ are known.  Recall that a hyperplane is defined by the equation $(\br-\nu_0)\cdot \nu =0$.  This implies that the symmetrized version of $\A_0^\star$ is given in block form by
\begin{align}
\A_0^\star=\begin{bmatrix}
{\boldsymbol 0} & \nu/2 \\
\nu^{\rm T}/2 & -\nu \cdot \nu_0
\end{bmatrix} \label{eq:ao}
\end{align}
where ${\boldsymbol 0}$ is an $n\times n$ matrix of zeros.  \hfill {\small Q.E.F.}   \label{constr:hyperplane}
\end{construction}

\begin{remark}
It is reasonable to construct the sequence $\{\sigma^2_j\}$ taking into account the steps of Weighted Hyperplane Construction.  Noting that $\nu$ sets the characteristic length-scale of measurement space, we define
\begin{align}
\{\sigma^2_j\} = \{|\nu|\times 10^{-j}\}_{j=0}^K \label{eq:sigdef}
\end{align}
where $|\nu|$ is the length of $\nu$.  In practice, we take $K$ to be between 6 and 8, which corresponds to scaling $\sigma^2$ across as many decades relative to $|\nu|$.
\end{remark}

The left plot of Fig.\ \ref{fig:sarsdata} shows the homotopy classifier applied to a set of 2D training data, using the Weighted Hyperplane Classifier as an initial guess, Eq.\ \eqref{eq:sigdef} to set the homotopy parameters, and letting $q=67/115$, which is the true prevalence of the data.  The first three iterations of the homotopy classifier according to Eq.\ \eqref{eq:homotopy} are shown.  As anticipated, the classification boundary improves with decreasing scale parameter $\sigma$, and within three iterations, it has essentially converged.  The right plot of Fig. \ref{fig:sarsdata} shows the result of immediately taking $\sigma\to 0$ without considering a sequence of decreasing scale parameters.  In this case, the optimization becomes trapped in a local minimum and fails to converge to a reasonable classification boundary.  Figure \ref{fig:intuitive} provides an intuitive explanation of why the homotopy method prevents failures of the type illustrated in the right plot of Fig.\ \ref{fig:sarsdata}.  For large values of $\sigma$, the optimization blurs the individual datapoints into a ``cloud'' or density with a characteristic length-scale of $\sigma^2$.  This eliminates local minima associated with individual datapoints being on one side or the other of the classification boundary.  After the first iteration of Eq.\ \eqref{eq:homotopy}, the classification boundary is sufficiently close to the global minimum of unregularized problem that subsequent iterations need only smooth out local minima in the vicinity of the optimal matrix $\A^\star(q)$.  Figure \ref{fig:3d} shows an example of the analysis applied to a 3D dataset, demonstrating the applicability to higher dimensions.  

\begin{figure}
\includegraphics[width=13cm]{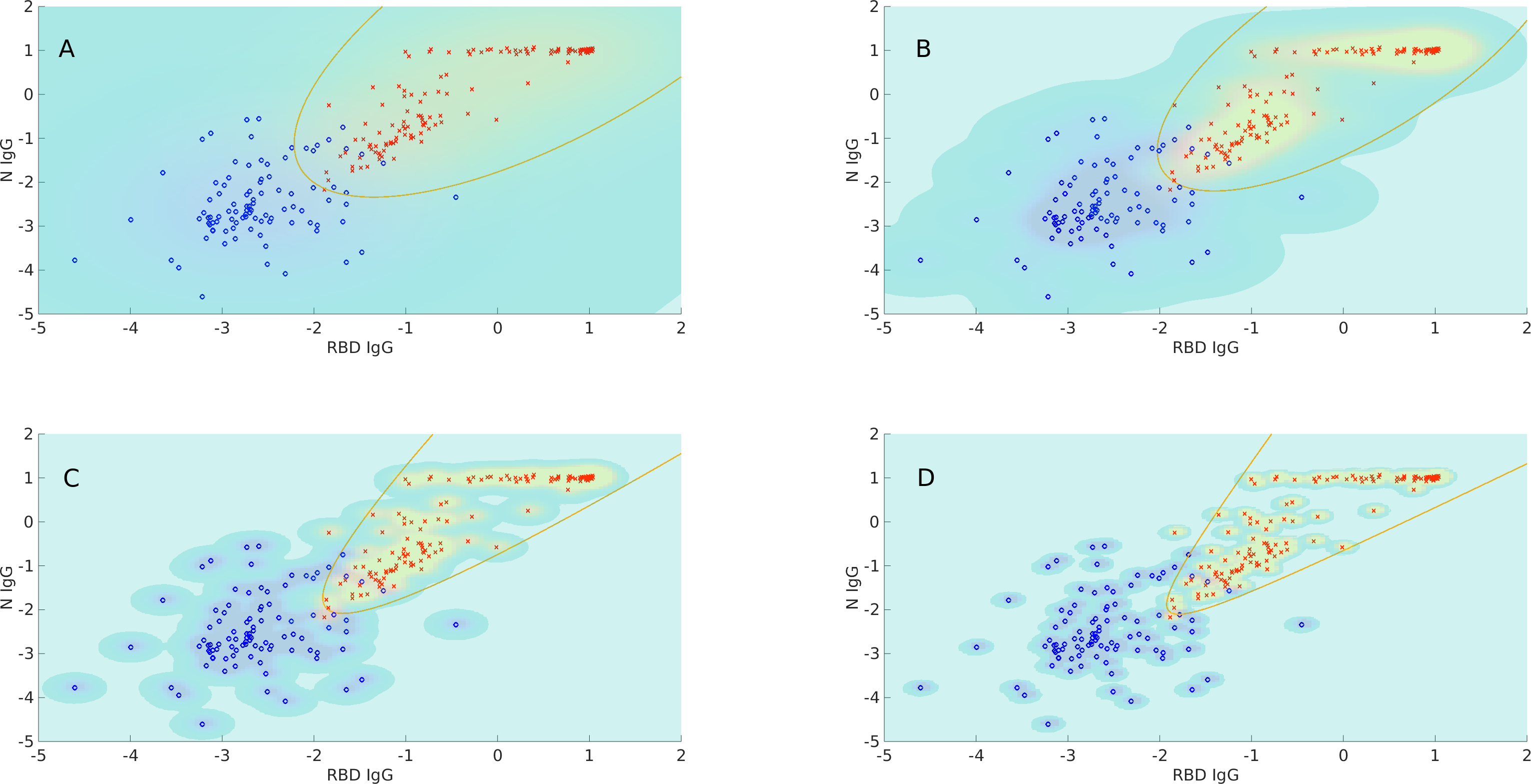}\caption{Illustration of how the homotopy method stabilizes optimization of the classification boundary.  We seek to find a conic section that best separates the positive and negative populations.  However, attempting to work directly with the empirical data is like threading a needle; there is a significant chance that too many data points will end up on the wrong side of the boundary.  See Fig.\ \ref{fig:sarsdata}, for example.  Thus, we blur out the data to temporarily diminish the significance of individual points.  Intuitively, this regularizes  each iteration of the optimization by finding the surface that best separates the blue and yellow shaded areas.  The degree of blurring is proportional to $\sigma^2$.  Going left to right and top to bottom (A to D), the four values of $\sigma^2$ are $\sigma^2_1=1$, $\sigma^2_2=0.25$, $\sigma^2_3=0.1$, and $\sigma^2_4=0.05$.  The final boundary computed for $\sigma^2_j$ is used as the initial guess in the optimization associated with $\sigma^2_{j+1}$.  The color scale is created by convolving the empirical distributions for positive and negative samples with a Gaussian probability density function having a standard deviation $\sigma$.  This is done for illustrative purposes and does not reflect the specifics of how data is blurred in Eq.\ \eqref{eq:homotopy}.  Note that as $\sigma\to 0$, the objective function becomes the empirical classification error, which is the quantity we wish to minimize.}\label{fig:intuitive}
\end{figure}

\begin{figure}\begin{center}
\includegraphics[width=8cm]{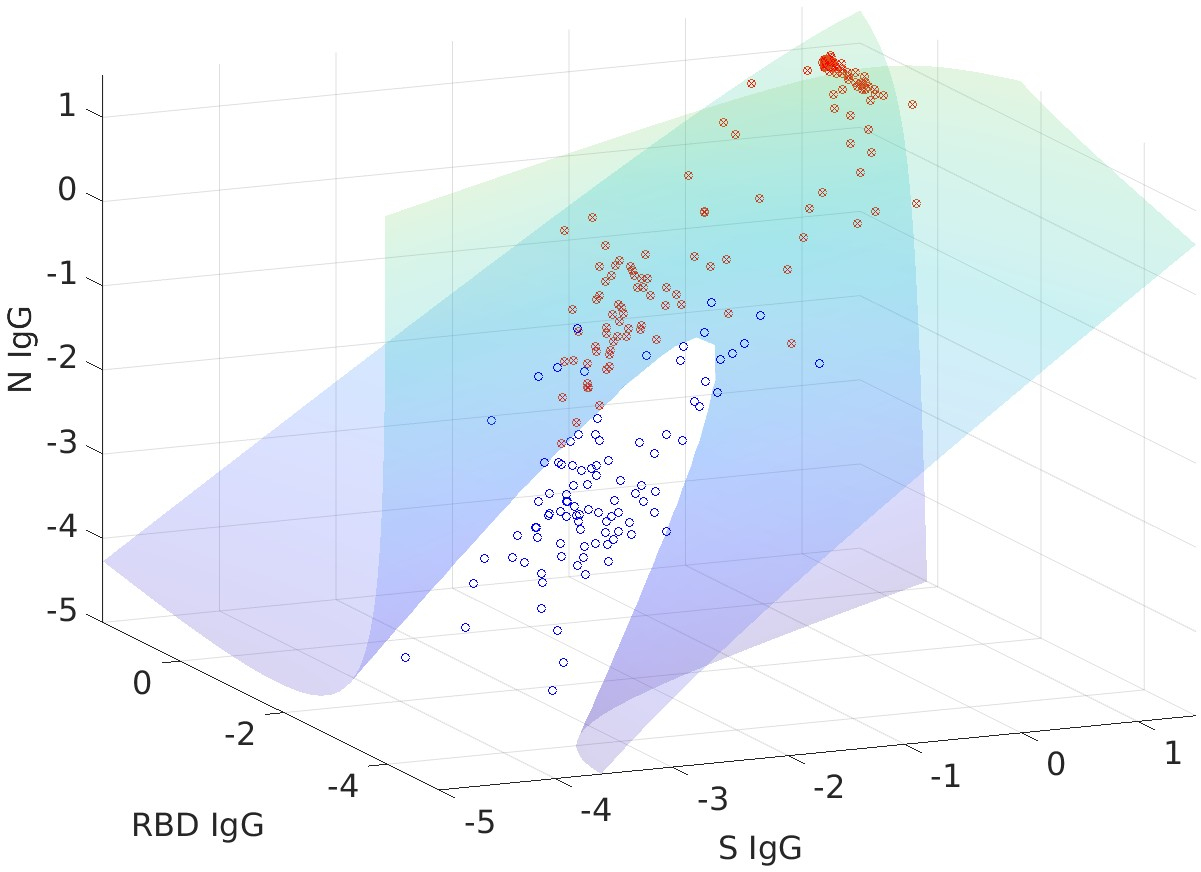}\end{center}\caption{Example of the analysis applied to a 3D dataset.  Due to the perspective, the boundary does not appear to fully separate populations.  However, the accuracy of the classification is roughly 99\%, with an empirical sensitivity and specificity of 98\% and 100\%, respectively.  }\label{fig:3d}
\end{figure}

\section{Uncertainty Level-Sets}
\label{sec:UQ}

Having developed a method for constructing classification boundaries, we now return to the task of uncertainty quantification foreshadowed in Sec.\ \ref{subsec:motivation}.  Our goals in this section are to: (i) show that despite not having access to the conditional PDFs, the function $\mathcal B(\br,\phi^\star(q))$ contains all of the information needed to construct $Z^\star(\br)$; and (ii) develop a practical method for approximating $Z^\star(\br;q)$ given an empirical training population $\Pi$.    To this end, the various interpretations of $q$ play a fundamental role; see Key Remark \ref{rem:prev_interp}.

\subsection{Theoretical Issues}
\label{subsec:uqtheory}

To better understand the relationship between $B^\star(q)$ and $Z^\star(\br;q)$,  recognize that only the ratio of conditional PDFs are needed for classification.  This motivates the following definition.
\begin{definition}
The ratio $R: \Gamma \to [0,\infty]$ defined as
\begin{align}
R(\br) = \frac{N(\br)}{P(\br)} \label{eq:relprob}
\end{align}
is the {\bf relative conditional probability} of a measurement outcome, where we interpret the situation $R(\br)= \infty$ to correspond to $P(\br)=0$.
\end{definition}

\begin{remark}
There is no ambiguity in the definition of $R(\br)$ associated with the case $P(\br)=N(\br)=0$, which yields an indeterminate form for $R(\br)$.  Such points can always be excluded from $\Gamma$, as they have zero measure with respect to $Q(\br)$.  
\end{remark}

In light of the SLS property and SLS postulate, the relative conditional probability is isomorphic to the boundary function $\mathcal B(\br,\phi^\star(q))=0$ in the sense that
\begin{align}
B^\star(q)=\{\br: \mathcal B^\star(\br,q) =0 \} =\{\br: R(\br) = q/(1-q) \} = \{\br: \mathcal B(\br,\phi^\star(q)) =0 \}.  \label{eq:isorelprob} 
\end{align}
Since Eq.\ \eqref{eq:isorelprob} does not alter the structure of the sets $B^\star(q)$, $R(\br)$ must contain all the information necessary for both classification and UQ.  This reduces both tasks to an $(m+1)$ dimensional modeling problem when $\br \in \mathbb R^m$.  To extract $R(\br)$ from a $(m-1)$-dimensional boundary set, observe that  we can ``recover'' an additional degree of freedom from Eq.\ \eqref{eq:isorelprob}, which yields the value of $R(\br)$ on $B$.  We recover the last degree of freedom by allowing $q$ to vary from $0$ to $1$.

To clarify this idea, recognize that the prevalence plays three distinct but related roles in our analysis.  First, $q$ defines a probability measure associated with the fraction of positive individuals in a test population.  Second, $q$ defines a convex combination of error terms in Eqs.\ \eqref{eq:error} and \eqref{eq:empiricalobjective}.  Thus, it could equally be viewed as a parameter of the objective function, not necessarily a property of any given population.  Third, Eq.\ \eqref{eq:isorelprob} shows that $q$ has a one-to-one relationship with $R(\br)$, which means that we can treat $q$ as a function of $R$, or even $\br$.  It therefore stands to reason that $R(\br)$ is not even needed to estimate $Z(\br;U,q)$.  We make this precise as follows.

\begin{definition}
We refer to $\q: \Gamma \to [0,1]$ defined via
\begin{align}
\q(\br) &= \frac{R(\br)}{1+R(\br)} = \frac{N(\br)}{N(\br) + P(\br)} \label{eq:prevfun}
\end{align}
as the {\bf prevalence function}.
\label{def:prevfun}
\end{definition}

Given these definitions, we now prove elementary but fundamental properties of the boundary sets $B^\star(q)$ that establish them as level sets.  
 
\begin{proposition}[Level-Set Representation]
Assume the WLS property holds, and let $q\in (0,1)$.  Then any partition $U^\star(q)$ satisfying the optimal partition convention has the following properties:
\begin{itemize}
\item[I.]  ${B^\star(q) \subset D_p^\star(q')/B^\star(q')}$ and ${B^\star(q') \subset D_n^\star(q)/B^\star(q)}$ for $q' > q$, and in particular, $B^\star(q) \cap B^\star(q') = \emptyset$ for $q\ne q'$;
\item[II.] for a fixed $\br$, the optimally assigned class $\hat C(\br,U^\star(q))$ is a monotone increasing function of $q$;
\item[III.] for every $\br\in \Gamma$, there exists one and only one value of $q$ for which $\br \in B^\star(q)$, and in particular, $\br  \in  B^\star(q) \iff \q(\br)=q$ and ${\mathcal B^\star(\br,\q(\br)) = 0}$;
\item[IV.] $\q(\br)\in (0,1)$ is the value of $q$ for which $\br$ has a 50\% probability of belonging to either class, so that $B^\star(q)$ is the 50\% probability level-set;
%\item[V.] $Z_p(\br;q)$ and $Z_n(\br;q)$ can be expressed is a function of only $\q$ and $q$
\end{itemize}\label{prop:ls}
\end{proposition}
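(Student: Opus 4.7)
The proof plan rests on a single elementary observation: the map $q \mapsto q/(1-q)$ is a strictly increasing bijection from $(0,1)$ to $(0,\infty)$, while by definition $\br \in B^\star(q) \iff R(\br) = q/(1-q)$, where $R(\br) = N(\br)/P(\br)$. Every assertion in the proposition is a consequence of this bijection combined with the optimal partition convention from Definition \ref{def:optconv}.

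For part I, suppose $\br \in B^\star(q)$ so that $R(\br) = q/(1-q)$. For any $q' > q$, strict monotonicity yields $q'/(1-q') > R(\br)$, which rearranges to $q'P(\br) > (1-q')N(\br)$. Under the optimal partition convention this places $\br$ strictly inside $D_p^\star(q')$ and not on $B^\star(q')$, giving the first inclusion; the second is symmetric. Disjointness $B^\star(q) \cap B^\star(q') = \emptyset$ for $q \ne q'$ is immediate since $R(\br)$ would otherwise be forced to take two distinct values.

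For part II, fix $\br$ and increase $q$. By the optimal partition convention, $\hat C(\br, U^\star(q)) = 1 \iff qP(\br) \ge (1-q)N(\br) \iff q/(1-q) \ge R(\br)$ (with obvious modifications in the degenerate cases $P(\br) = 0$ or $N(\br) = 0$, which are handled in the obvious way). Since the left-hand side is strictly increasing in $q$, once the inequality is satisfied it remains so, hence $\hat C$ is monotone nondecreasing in $q$; together with the convention's pointwise correctness this yields the stated monotonicity. Part III follows by solving $q/(1-q) = R(\br)$ for $q$, which gives the unique solution $q = R(\br)/(1+R(\br)) = \q(\br)$, with existence on $(0,1)$ guaranteed whenever both $N(\br)$ and $P(\br)$ are finite and not simultaneously zero (a set of full measure under $Q$); uniqueness is already implied by the disjointness in part I.

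Finally, part IV reduces to evaluating Eq.\ \eqref{eq:localacc} on $B^\star(q)$: whichever side of the arbitrary partition $\{B_p^\star, B_n^\star\}$ the point $\br$ lands on, the numerator equals either $qP(\br)$ or $(1-q)N(\br)$, and these coincide on $B^\star(q)$, while the denominator $Q(\br;q) = qP(\br) + (1-q)N(\br)$ is exactly twice this common value. Hence $Z^\star(\br;q) = 1/2$, identifying $B^\star(q)$ as the 50\% probability level-set and confirming that $\q(\br)$ is the prevalence at which $\br$ is classified with equal probability. The main (minor) obstacle is bookkeeping around the degenerate points where $P(\br)$ or $N(\br)$ vanishes, where one must interpret $R(\br) \in \{0, \infty\}$ consistently with the convention $R(\br) = \infty \Leftrightarrow P(\br)=0$ declared in Definition \ref{eq:relprob}; all other steps are direct manipulations.
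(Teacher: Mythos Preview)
Your proposal is correct and follows essentially the same approach as the paper: both arguments hinge on the strict monotonicity of $q\mapsto q/(1-q)$ together with the identification $\br\in B^\star(q)\iff R(\br)=q/(1-q)$, and then read off each claim from this bijection and the optimal partition convention. The only cosmetic difference is in part IV, where the paper rewrites $Z^\star(\br;q)$ in terms of $\q(\br)$ before evaluating at $q=\q(\br)$, whereas you evaluate Eq.~\eqref{eq:localacc} directly on $B^\star(q)$ using $qP(\br)=(1-q)N(\br)$; the two computations are equivalent.
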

\begin{proof}

Claim I is directly verified from the structure of $D_p^\star(q)$ and $D_n^\star(q)$ and the optimal partition convention, where the latter ensures that  the inequality structure of Eqs.\ \eqref{eq:Dp} and \eqref{eq:Dn} always holds pointwise.  

To prove claim II, fix $\br$ and $q\in (0,1)$.  Because $U^\star(q)$ is a partition, one of three possibilities must hold: $\br\in D_p^\star(q) / B^\star(q)$, $\br \in B^\star(q)$, or $\br \in D_n^\star(q) / B^\star(q)$.  Observe also that $qP(\br)$ is a strictly monotone increasing function of $q$, and $(1-q)N(\br)$ is a strictly monotone decreasing function of $q$.  This implies that $D_p^\star(q) \subset D_p^\star(q')$ if $q < q'$.  Thus in the case that $\br\in D_p^\star(q) / B^\star(q)$, the class assignment of $\br$ remains unchanged as $q$ increases.  If $\br \in B^\star(q)$, then $\hat C(\br;U^\star(q))$ depends on the partition, but by claim I, $\hat C(\br;U^\star(q'))=1$ for any $q'>q$.  Similar arguments imply that $\hat C(\br;U^\star(q'))$ cannot decrease if  $\br \in D_n^\star(q) / B^\star(q)$ and $q'> q$.  

To prove III, note that $R(\br)=N(\br)/P(\br)$ is a mapping $R: \Gamma \to [0,\infty]$.  Moreover, the mapping ${\mathcal R: [0,1] \to [0,\infty]}$ defined as $\mathcal R(\q) = \frac{\q}{1-\q}$ is one-to-one, since $\mathcal R(\q)$ is a strictly monotone increasing function of $\q$.  The boundary set $B^\star(q)$ is defined as the set of $\br$ for which $R(\br) = \mathcal R(q)$, and since the ranges of the two are identical and $\mathcal R(q)$ is invertible, there must exist a $q$ such that $R(\br)=q/(1-q)$.  In fact, this $q$ is $\q(\br)$, which follows from Def.\ \ref{def:prevfun}.  Moreover, since $R(\br)$ is a function, every $\br$ maps to only one value in the range of $R$, and hence one value of $q$.  This proves the main claim of III, and  ${\mathcal B^\star(\br,\q(\br)) = 0}$ follow from the WLS property.  

To prove IV, observe that $Z^\star(\br)$ can be rewritten as
\begin{align}
Z^\star(\br;q) = \frac{q [1-\q(\br)] \mathbb I(\br\in D_p^\star) + (1-q)\q(\br) \mathbb I(\br\in D_n^\star)}{q[1-\q(\br)]+(1-q)\q(\br)}. 
\end{align}
The range of $\q(\br)$ is $[0,1]$.  Thus, when $q\in (0,1)$ and $q=\q(\br)$ for a fixed value of $\br$, one finds $Z^\star(\br)=0.5$, which is the probability correctly classifying the sample.  Note that $Z^\star(\br)$ is indeterminate when $q=\q(\br)\in \{0,1\}$.
%Claim V follows from the fact that $R(\br)=\mathcal R(\q(\br))$ and the definitions of $Z_p(\br;q)$ and $Z_n(\br;q)$.  \qed
\end{proof}

\noindent{\bf Interpretation:} Proposition \ref{prop:ls} defines what we mean by a collection of level sets.  Four properties must hold: level-sets must be disjoint (I), be well-ordered with respect to one another (I) and fixed $\br$ (II), cover $\Gamma$ (III), and be physically meaningful (IV).    

\medskip

We are now in a position to determine the sense in which $\mathcal B(\br;\phi^\star(q))= \mathcal B^\star(\br;q)$.  However, it is possible to pose this as a consequence of a more general result by first taking into account the following definition.

\begin{definition}[Partition Induced by a Classifier]
Let $\hat C(\br;q)$ be an arbitrary classifier, and let $D_n = \{\br: \hat C(\br;q)=0\}$ and $D_p = \{\br: \hat C(\br;q)=1\}$.  We say that $U=\{D_n,D_p\}$ is the \textbf{partition induced by the classifier.}
\end{definition}

\begin{proposition}[Class-Switching Representation]
Let the WLS property and WLS postulate hold.  Let $\hat C(\br;q)$ be an arbitrary classifier inducing a partition $U^\dagger$ that minimizes $\mathcal E(U;q)$ for $q\in(0,1)$, and assume that $U^\dagger$ satisfies the optimal partition convention.  Then for every $\br$, either: $\q(\br)\in \{0,1\}$; or $\q(\br) = q_l(\br)=q_h(\br)$, where
\begin{align}
q_l(\br)=\sup \{q: \hat C(\br;q) = 0\}, && q_h(\br)= \inf \{q: \hat C(\br;q) = 1\}.
\end{align}
\label{prop:bfr}
\end{proposition}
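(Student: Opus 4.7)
The plan is to combine the monotonicity of class assignment in $q$ and the uniqueness of the boundary value, both already encoded in Proposition \ref{prop:ls}, with the pointwise characterization of $U^\dagger$ enforced by the optimal partition convention.

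First I would invoke Lemma \ref{lemma1} to place $U^\dagger(q)$ in the equivalence class given by Eqs.\ \eqref{eq:Dp}--\eqref{eq:Dn}, for each fixed $q \in (0,1)$. Combined with the optimal partition convention, this yields the pointwise biconditionals $\hat C(\br;q)=1 \iff qP(\br) \ge (1-q)N(\br)$ and $\hat C(\br;q)=0 \iff qP(\br) \le (1-q)N(\br)$. In particular, $\hat C(\br;q)$ is determined by the sign of $qP(\br)-(1-q)N(\br)$, with both assignments simultaneously permitted only when $\br \in B^\star(q)$.

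Next I would fix $\br$ with $\q(\br)\in(0,1)$ and study $\Delta(q) := qP(\br) - (1-q)N(\br)$ as a function of $q$. Since $\q(\br) = N(\br)/[N(\br)+P(\br)] \in (0,1)$, both $P(\br)$ and $N(\br)$ are strictly positive at $\br$, so $\Delta$ is strictly increasing and continuous in $q$. By Proposition \ref{prop:ls} III, the unique $q$ at which $\Delta(q)=0$ is $q=\q(\br)$. Hence $\hat C(\br;q)=0$ for every $q<\q(\br)$ and $\hat C(\br;q)=1$ for every $q>\q(\br)$, which is the concrete realization of the monotonicity in Proposition \ref{prop:ls} II.

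From this, computing $q_l(\br)$ and $q_h(\br)$ is immediate. The set $\{q:\hat C(\br;q)=0\}$ contains $(0,\q(\br))$ but no point strictly above $\q(\br)$, so its supremum is exactly $\q(\br)$; a symmetric argument for $\{q:\hat C(\br;q)=1\}$ gives $q_h(\br)=\q(\br)$. Finally I would note that the excluded cases $\q(\br)\in\{0,1\}$ correspond to one conditional density vanishing at $\br$, in which case one of the defining sets for $q_l$ or $q_h$ may be empty within $(0,1)$ and the equality cannot be asserted.

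I expect the main (though minor) obstacle to be the behaviour at $q=\q(\br)$ itself: the optimal partition convention permits $\br$ to lie in either $D_p^\dagger(\q(\br))$ or $D_n^\dagger(\q(\br))$ on the boundary set, so whether $q=\q(\br)$ belongs to $\{q:\hat C(\br;q)=0\}$ or $\{q:\hat C(\br;q)=1\}$ depends on an arbitrary partitioning of $B^\star(\q(\br))$. It must be verified that this ambiguity does not shift either the supremum or the infimum, which holds because $\q(\br)$ is an accumulation point of each set from the appropriate side, so its value is pinned down irrespective of membership at the endpoint itself.
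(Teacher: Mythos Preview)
Your proposal is correct and rests on the same underlying idea as the paper's proof: both exploit that, for fixed $\br$ with $P(\br),N(\br)>0$, the quantity $qP(\br)-(1-q)N(\br)$ is strictly increasing in $q$ with a unique zero at $\q(\br)$, together with the pointwise inequalities guaranteed by the optimal partition convention. The only difference is stylistic: you argue directly by tracking the sign of $\Delta(q)$ and reading off the supremum and infimum, whereas the paper argues by contradiction (assuming $\q(\br)<q_l(\br)$ or $\q(\br)>q_h(\br)$ and invoking Proposition~\ref{prop:ls}~I to produce a misclassified $q''$); your direct route is slightly cleaner and your explicit treatment of the endpoint ambiguity at $q=\q(\br)$ is a nice touch the paper leaves implicit.
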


\begin{proof}
Consider first the case for which $P(\br) > 0$, $N(\br)=0$.  Clearly $\br \in D_p^\star(q)$ for any positive $q$.  If $\hat C(\br;q) = 0$, then by Lemma \ref{lemma1}, $qP(\br) \le (1-q)N(\br)$, which is a contradiction.  Thus, $\br$ cannot fall on any boundary set, and we assign $\q(\br)=0$.  Similarly one can assign $\q(\br)=1$ when $P(\br)=0$ and $N(\br) > 0$.  

If both $P(\br) > 0$ and $N(\br) > 0$, then by Proposition \ref{prop:ls} and Def.\ \ref{def:prevfun}, $\q(\br) \in (0,1)$.  Arguing by contradiction, assume that $\q(\br) = q' < q_l(\br)$.  By Proposition \ref{prop:ls} claim III, $\br \in B^\star(q')$.  Let $q''$ satisfy $q' < q'' < q_l(\br)$.  By claim I of Proposition \ref{prop:ls} and the definition of $B^\star(q')$, the optimally assigned class $\hat C(\br;U^\star(q''))=1$, since $q'' > q'$.  But by assumption and Lemma \ref{lemma1}, $\hat C(\br;q'') =0$, since $q'' < q_l(\br)$.  Thus, $\hat C(\br;q)$ is not optimal, which is a contradiction.  Similarly, one finds that $\q(\br) \ngtr q_h(\br)$.  The equality $q_l(\br)=q_h(\br)$ is proved in a similar manner.   That is, assuming the result is false implies there exists a $q''$ such that $q_l(\br)= < q'' < q_h(\br)$.  One finds that either $\hat C$ is not optimal or one of the definitions $q_l(\br)$, $q_h(\br)$, is false, which is a contradiction.  
\end{proof}

\begin{remark}
\Cref{prop:bfr} is general insofar as $\hat C(\br;q)$ need not be a boundary classifier.  Any ML algorithm whose classes induce a partition of some set in $\mathbb R^m$ satisfies the hypotheses, provided the optimal partition convention holds.
\end{remark}

\begin{corollary}
For any $\br$, $Z^\star(\br;q) = Z^\star_l=Z^\star_h$, where 
\begin{align}
Z^\star_k = \frac{q [1-q_k(\br)] \mathbb I(\hat C(\br;q) = 1) + (1-q)q_k(\br) \mathbb I(\hat C(\br;q) = 0)}{q[1-q_k(\br)]+(1-q)q_k(\br)}, && k\in \{l,h\}. \label{eq:rlocalacc}
\end{align}
\end{corollary}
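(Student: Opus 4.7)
The plan is to reduce the corollary to a mechanical substitution that combines Proposition \ref{prop:bfr} with the rewritten form of the local accuracy derived inside the proof of Proposition \ref{prop:ls}(IV). Because the hypotheses of the corollary are identical to those of Proposition \ref{prop:bfr}, every one of its conclusions, and in particular the identity $\q(\br)=q_l(\br)=q_h(\br)$, is immediately available.

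First I would recall from the proof of Proposition \ref{prop:ls}(IV) that the local accuracy admits the representation
\begin{align*}
Z^\star(\br;q) = \frac{q[1-\q(\br)]\,\mathbb I(\br\in D_p^\star) + (1-q)\q(\br)\,\mathbb I(\br\in D_n^\star)}{q[1-\q(\br)] + (1-q)\q(\br)}.
\end{align*}
Next, since $U^\dagger$ minimizes $\mathcal E(U;q)$ and satisfies the optimal partition convention, Lemma \ref{lemma1} together with Definition \ref{def:optconv} forces $\{\hat C(\br;q)=1\}$ to coincide pointwise with $D_p^\star(q)$ and $\{\hat C(\br;q)=0\}$ with $D_n^\star(q)$; hence the two pairs of indicators agree. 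Finally, I would invoke Proposition \ref{prop:bfr} to replace $\q(\br)$ by $q_l(\br)$ or by $q_h(\br)$ in the generic case $\q(\br)\in(0,1)$, which immediately produces \eqref{eq:rlocalacc} and the joint equality $Z^\star_l=Z^\star_h=Z^\star(\br;q)$.

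The main obstacle I anticipate is the boundary case $\q(\br)\in\{0,1\}$, at which $Z^\star(\br;q)$ was explicitly flagged as indeterminate in the proof of Proposition \ref{prop:ls}(IV). I would dispose of it by direct verification: if $\q(\br)=0$, then $N(\br)=0<P(\br)$, so $\br\in D_p^\star(q)$ for every $q\in(0,1)$, which forces $\hat C(\br;q)\equiv 1$ and $q_l(\br)=q_h(\br)=0$; substitution into \eqref{eq:rlocalacc} then returns the value $1$, consistent with the fact that a measurement with $N(\br)=0$ is always correctly labeled positive. The case $\q(\br)=1$ is symmetric. This endpoint bookkeeping is the only subtle part of the argument; everything else is routine simplification of the objects constructed in the preceding two propositions.
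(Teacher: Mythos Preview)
Your proposal is correct and follows exactly the route the paper intends: the corollary is stated without proof immediately after Proposition~\ref{prop:bfr}, so the paper's implicit argument is precisely the substitution of $\q(\br)=q_l(\br)=q_h(\br)$ from Proposition~\ref{prop:bfr} into the rewritten form of $Z^\star$ from the proof of Proposition~\ref{prop:ls}(IV). Your explicit treatment of the endpoint cases $\q(\br)\in\{0,1\}$ is a welcome addition that the paper leaves to the reader.
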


\begin{corollary}
Assume that a boundary classifier $\hat C_{\mathcal B}(\br;q)$ satisfies the SLS postulate.  Then the locus of points solving ${\mathcal B}(\br;q) = 0$ have the property that $\q(\br)=q$.  
\end{corollary}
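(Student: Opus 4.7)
The plan is to chain together the SLS postulate with \Cref{prop:ls}, claim III, which already does nearly all of the work. The SLS postulate for a boundary classifier $\hat C_{\mathcal B}(\br;q)$ gives by definition two things: the induced partition minimizes $\mathcal E(U;q)$ and satisfies the optimal partition convention (through the WLS postulate), and the explicit identification $\mathcal B(\br;\phi(q)) = \mathcal B^\star(\br;q)$ holds for every $q \in (0,1)$. The latter equality means that the zero sets of the two functions coincide pointwise, not merely up to a set of measure zero.

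With this in hand, I would take an arbitrary $\br$ with $\mathcal B(\br;\phi(q))=0$. By the SLS identification, $\mathcal B^\star(\br;q)=0$ as well, so $\br \in B^\star(q)$. Then I invoke \Cref{prop:ls}, claim III, which states that $\br \in B^\star(q) \iff \q(\br)=q$. This delivers the conclusion $\q(\br)=q$ immediately.

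There is essentially no obstacle here, since the corollary is designed to read off directly from the stronger hypothesis. The only thing worth flagging is that the SLS postulate is genuinely stronger than the WLS postulate precisely because it forces the modeled level-set function to agree with $\mathcal B^\star$ everywhere, rather than allowing disagreement on null sets; without this strengthening one could only conclude that $\q(\br)=q$ almost everywhere on the locus $\mathcal B(\br;q)=0$, which would be insufficient for the pointwise statement claimed. Thus in the write-up I would be careful to point to the exact clause of the SLS postulate being used and cite \Cref{prop:ls}~III as the final step.
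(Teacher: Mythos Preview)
Your proposal is correct and matches the paper's intended reasoning: the corollary is stated without proof precisely because it reads off from the SLS postulate's identification $\mathcal B(\br;\phi(q))=\mathcal B^\star(\br;q)$ together with \Cref{prop:ls}, claim III. Your remark that the pointwise (rather than almost-everywhere) conclusion hinges on the SLS postulate's exact equality of boundary functions is a useful clarification and worth keeping in the write-up.
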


\begin{figure}
\includegraphics[width=12.5cm]{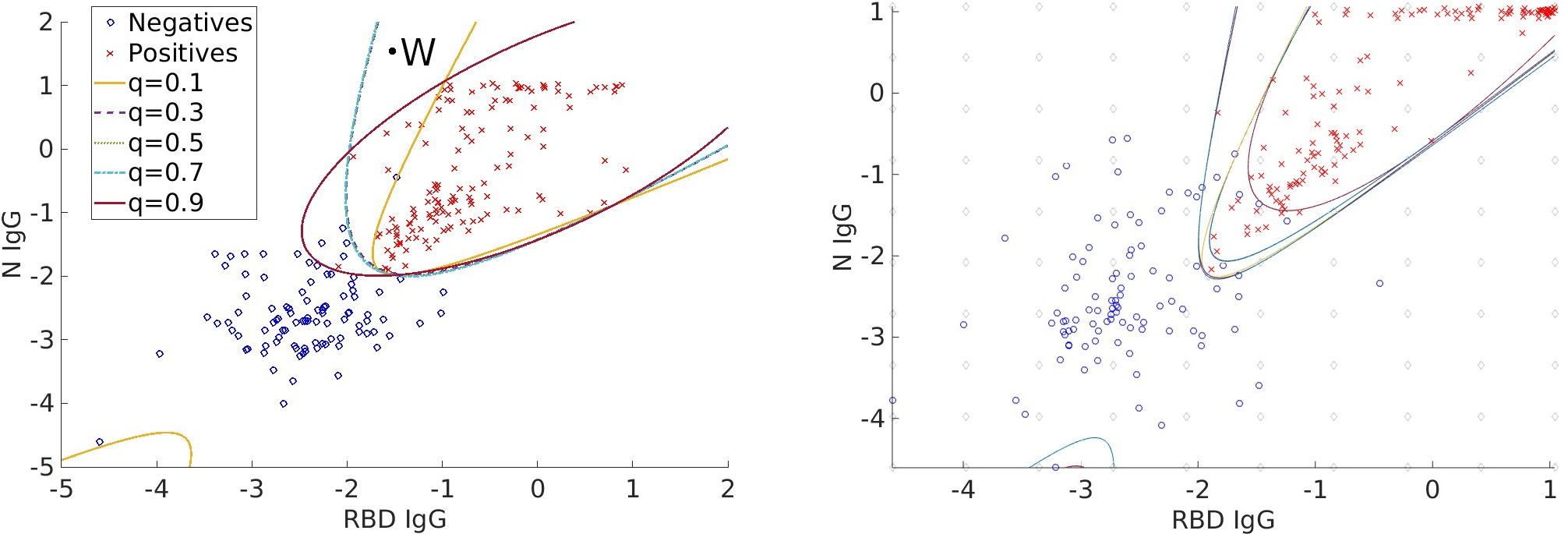}\caption{{\it Left:} Example of optimal classification boundaries that cross.  Each boundary was computed by solving the optimization problem associated with Eq.\ \eqref{eq:objectivesum}, assuming a boundary described by the quadratic model.  Note that the classification boundaries cross.  Thus, the point labeled ``W'' (black dot) has an ambiguously defined classification accuracy.  Specifically, it must simultaneously satisfy the inequalities $\alpha(\br_{\rm W})>0.1$, $\alpha(\br_{\rm W})<0.3$, and $\alpha(\br_{\rm W})>0.9$, which is not possible.  \textit{Right:} Example of classification boundaries computed according to Def.\ \ref{def:lsof}.  We computed classification boundaries for $q=0.05,0.1,...,0.95$ and used 100 uniformly spaced shadow points, which are indicated by faint black diamonds.  We also included a shadow point at the coordinate (-1.91,-2.25) to eliminate overlapping in that region.  Note that $q$ increases going from the upper-right to lower-left.     }\label{fig:crossing}
\end{figure}

\subsection{Practical Implementation}
\label{subsec:uqpractice}

In the context of Eq.\ \eqref{eq:homotopy}, realizing the UQ estimates of Proposition \ref{prop:ls} must address two challenges.  First, it is generally difficult to estimate $\phi^\star(q)$ except on a finite grid of $q$.  This means that we may overestimate the set $[q_l,q_h]$ in Proposition \ref{prop:bfr}, especially when $q_l=q_h$.   Second, the formulation of Sec.\ \ref{sec:classification} does not prohibit crossing between any two estimated level sets with different values of $q$.  While this may not occur in idealized cases, it is problematic given finite training populations.  Figure \ref{fig:crossing} illustrates this problem in the context of the data from Example \ref{ex:sars}.  At points where the level sets cross, we cannot assign meaningful uncertainty estimates via Eq.\ \eqref{eq:rlocalacc}, since $R(\br)$ is no longer single valued.  

In light of claim i.\ of Proposition \ref{prop:ls} and Proposition \ref{prop:bfr}, we interpret crossing level-sets as a violation of the class monotonicity property.  In Fig.\ \ref{fig:crossing}, for example, the class of point ``W'' can decrease with $q$.  This suggests introducing  monotonicity as a constraint of the optimization.  However, doing so is challenging because $\hat C_{\mathcal B}(\br;q)$ integer-valued.  Instead, we apply the constraint directly to $\mathcal B(\br,\phi^\star(q))$ as follows.
\begin{definition}[Level-set Objective Function]
Let $\brho_i$, $i\in\{1,2,...,\tau\}$ be an arbitrary, finite set of points in $\Gamma$.  Let $\Pi$ be a training population.  Let $q_j$ satisfy $0 < q_1 < ... < q_{\theta} < 1$ for some $\theta$, and define 
\begin{align}
\mathcal L_{level}[\phi_1,...,\phi_{\theta};\Pi,\sigma^2] = \sum_{j=1}^{\theta} \mathcal L_{sr}[\phi_j;\Pi,q_j;\sigma^2] 
\end{align}
where the $\phi_j$ are treated as unknown parameters.  Then we call $\mathcal L_{level}$ the {\bf level-set objective function} and subject it to the constraint that
\begin{align}
\mathcal B(\brho_i,\phi(q_j) \le \mathcal B(\brho_i,\phi(q_{j+1})) \label{eq:monconstraint}
\end{align} 
for all pairs $\brho_i,q_j$.  \label{def:lsof}
\end{definition}
\begin{remark}  We refer to the $\brho_i$ as {\bf shadow points}, since they need not be objects of classification {\it per se}.  Their role is to ensure that the level-sets constructed by minimizing of $\mathcal L_{level}$ do not intersect.  Heuristically they do this by requiring $\mathcal B(\brho_i,\phi^\star(q))$ to be monotone increasing function $q$.  In principle, however, this is not sufficient to guarantee that $\hat C_{\mathcal B}(\br;q)$ is also monotone increasing, which can happen if the values of $\phi_j$ are poorly scaled.  In practice we find that scale regularization [e.g.\ via Eq.\ \eqref{eq:regscale}] combined with sufficiently many shadow points overcomes this problem.  See Fig.\ \ref{fig:crossing} for an example of this analysis.  
\end{remark}

\section{Discussion}
\label{sec:discussion}

\subsection{Comparison with Past Works}
\label{subsec:comparison}

Our approach is related to generative and discriminative ML techniques, although it does not neatly fall into either category.   While there is some disagreement within the community as to how best to define these concepts (compare \cite{Disc2,Disc1}), we adopt the perspective of Ref.\ \cite{Disc3}:
\begin{itemize}
\item A generative model quantifies the probability of an outcome $\br$ conditioned on the class.
\item A discriminative model quantifies the probability of belonging to a class, conditioned on outcome $\br$.  
\end{itemize}
These definitions are clearly linked by the concept of conditional probability \cite{Bayes}, but in machine learning they have often been treated as irreconcilable.  

To frame Eq.\ \eqref{eq:homotopy} in the context of generative modeling, consider methods such as linear discriminant and quadratic discriminant analyses (QDA) \cite{Fisher,Hardle,Rao,Venables}.  QDA, for example, assumes that the conditional PDFs are multivariate Gaussians.  The classifier is constructed from the optimal Bayes rule \cite{Bayes} in the spirit of Eqs.\ \eqref{eq:Dp} and \eqref{eq:Dn}.  As a result, the classification boundary is necessarily quadratic, since the ratio of two such PDFs is still an exponential of a quadratic.  However, the converse is not true: a quadratic classification boundary does not imply that $P(\br)$ and $N(\br)$ are Gaussian.  Figure \ref{fig:syndat} is a notable counterexample, since the underlying distribution for the pair $x,y$ arises from the sum of Gaussian and chi-squared random variables.  Thus, the quadratic version of our analysis is more general than QDA.

To place Eqs.\ \eqref{eq:empiricalobjective} and \eqref{eq:homotopy} in the context of discriminative modeling, consider that they resemble support vector machines (SVM) \cite{SVM1,SVM3,SVM2,RW}.  However,  SVMs optimize an empirical objective function that, loosely speaking, attempts to maximize the distance between the classification boundary and the nearest points in the training set (i.e.\ the hard-margin problem), with appropriate generalizations when the data is not linearly separable (the soft-margin problem) \cite{SVM1}.  Importantly, the objective function is not the classification error {\it per se}.  This appears to have limited the ability to connect SVMs to an underlying probabilistic framework, e.g.\ via Bayes optimal methods \cite{SVMProbability2,SVMProbability,SVMNotProb}.  In contrast, the objective function given by Eq.\ \eqref{eq:objectivesum} trivially reduces to the classification error when $\sigma\to 0$, and as a practical matter, we can take $\sigma$ to be sufficiently small so that $\mathcal L$ is an arbitrarily good (albeit empirical) estimate of Eq.\ \eqref{eq:error}.  Moreover, Lemma \ref{lemma1} and Proposition \ref{prop:bfr} imply that the resulting classification boundaries quantify {\it all} relative probabilities of measurement outcomes.  Thus, we interpret these results as directly connecting a discriminative classifier in the spirit of Eq.\ \eqref{eq:homotopy} to its generative counterpart.  

These observations suggest that the distinction between discriminative and generative classifiers requires more nuance.  We propose the following:
\begin{itemize}
\item[(1)] A generative classifier models the relative conditional probability $R(\br)=N(\br)/P(\br)$ directly, which induces the ``discriminative'' boundary given by Eqs.\ \eqref{eq:Dp} and \eqref{eq:Dn} via Lemma \ref{lem:optclass}.
\item[(2)] A discriminative classifier invokes Lemma \ref{lemma1} and Proposition \ref{prop:bfr} to determine (or bound) level-sets of the generative model $R(\br)$.  
\end{itemize} 
In other words, these two tasks are converses of one another in a way that mirrors the relationship  between Lemmas \ref{lem:optclass} and \ref{lemma1}.

These observations have implications for both discriminative and generative modeling.  First, they suggest the need to revisit the objective functions that define the classifiers.  To connect generative and discriminative classifiers, we needed to slightly modify the definition of both.  Second, it may be useful to directly model relative probabilities instead of conditional probabilities, as this is both more general and relevant for classification.  Third,  structure of Eq.\ \eqref{eq:emperror}, which is a convex combination suggests connections to concepts such as linear independence.  This points to possible extensions of our work to impure training data.  Indeed, it would be surprising if SVMs could be extended to unsupervised learning tasks, given the  current understanding of such techniques \cite{SVM3}.  Such topics are the focus of the next manuscript in this series.

\subsection{Limitations and Open Questions}
\label{subsec:limitations}

A key limitation of this work is the need to specify a model associated with the classification boundary.  While the low-order approximation made herein is likely reasonable for many assays, it is not {\it a priori} clear how to estimate the added uncertainties associated with this choice.  We are aware that the choice of model can lead to rare but notable instances wherein data is obviously classified incorrectly, e.g.\ via the second branch of a hyperbola.  While such issues can often be fixed manually after-the-fact, it may not always be clear how to do this in higher-dimensional settings.

We also note several important open directions.  In particular, generalizing the analysis to more than two classes (e.g.\ SARS-CoV-2 infected, vaccinated, and naive) remains challenging, as well as determining the conditions under which impure data can be used for such problems.  A more significant problem arises from waning of antibodies over time and associated variability, which depends on antigen, individual, and exposure level \cite{vaccine}.\footnote{Cross-reactive antibodies that may yield false positives are already accounted for by our analysis, provided the negative population has samples exhibiting cross-reactivity.}  Our analysis does not account for such effects and would require generalizations of Ref.\ \cite{Bedekar22}.   Finally, rigorous convergence estimates with the size of the empirical training sets are important research directions for further grounding the analysis discussed herein.

{\it Acknowledgements:} This work is a contribution of the National Institutes of Standards and Technology and is therefore not subject to copyright in the United States.  RB, CF, and AM were supported under the US National Cancer Institute, Grant U01 CA261276 (The Serological Sciences Network), Massachusetts Consortium on Pathogen Readiness (MassCPR) Evergrande COVID-19 Response Fund Award, and University of Massachusetts Chan Medical School COVID-19 Pandemic Research Fund.  RB was also supported by the National Center for Advancing Translational Sciences, NIH KL2-TR001455 Grant.  Certain commercial equipment, instruments, software, or materials are identified in this paper in order to specify the experimental procedure adequately. Such identification is not intended to imply recommendation or endorsement by the National Institute of Standards and Technology, nor is it intended to imply that the materials or equipment identified are necessarily the best available for the purpose.

{\it Use of all data deriving from human subjects was approved by the NIST and University of Massachusetts Research Protections Offices.}

{\it Data availability:} Data associated with the SARS-CoV-2 assay is available for download as supplemental material to Ref.\ \cite{Raquel1}.  An open-source software package implementing these analyses is under preparation for public distribution.  In the interim, a preliminary version of the software will be made available upon reasonable request.

\bibliographystyle{siamplain}
\bibliography{curved}

\end{document}